\renewcommand{\textbf}[1]{\begingroup\bfseries\mathversion{bold}#1\endgroup}
\numberwithin{equation}{section}
\declaretheorem[name=Theorem,within=section]{thm}
\declaretheorem[name=Proposition,numberlike=thm]{proposition}
\declaretheorem[name=Corollary,numberlike=thm]{corollary}
\declaretheorem[name=Remark,numberlike=thm,style=remark]{remark}
\title{Approximation-Generalization Trade-offs under (Approximate) Group Equivariance}
\author{%
  Mircea Petrache\thanks{UC Chile, Fac. de Matem\'aticas, \& Inst. de Ingenier\'ia Matematica y Computacional, Av. Vicuña\\
  \hspace*{1.8em}Mackenna 4860, Santiago, 6904441, Chile. \texttt{mpetrache@mat.uc.cl}.}\\
 \\
  \And
  Shubhendu Trivedi\thanks{\texttt{shubhendu@csail.mit.edu}.}
}
\begin{document}
\maketitle
\begin{abstract}
The explicit incorporation of task-specific inductive biases through symmetry has emerged as a general design precept in the development of high-performance machine learning models. For example, group equivariant neural networks have demonstrated impressive performance across various domains and applications such as protein and drug design. A prevalent intuition about such models is that the integration of relevant symmetry results in enhanced generalization. Moreover, it is posited that when the data and/or the model may only exhibit \emph{approximate} or \emph{partial} symmetry, the optimal or best-performing model is one where the model symmetry aligns with the data symmetry. In this paper, we conduct a formal unified investigation of these intuitions. To begin, we present general quantitative bounds that demonstrate how models capturing task-specific symmetries lead to improved generalization. In fact, our results do not require the transformations to be finite or even form a group and can work with partial or approximate equivariance. Utilizing this quantification, we examine the more general question of model mis-specification i.e. when the model symmetries don't align with the data symmetries. We establish, for a given symmetry group, a quantitative comparison between the approximate/partial equivariance of the model and that of the data distribution, precisely connecting model equivariance error and data equivariance error. Our result delineates conditions under which the model equivariance error is optimal, thereby yielding the best-performing model for the given task and data. Our results are the most general results of their type in the literature. 
\end{abstract}

\section{Introduction}
\label{sect:intro}

It is a common intuition that machine learning models that explicitly incorporate task-specific symmetries tend to exhibit superior generalization. 
The rationale is simple: if aspects of the task remain unchanged or change predictably when subject to certain transformations, then, a model without knowledge of them would require additional training samples to learn to disregard them. 
While the general idea is by no means new in machine learning, it has experienced a revival in interest due to the remarkable successes of group equivariant convolutional neural networks (GCNNs)~\cite{Cohen2016GroupEC, RavanbakhshSP17,Kondor2018OnTG,cohenGeneralTheoryEquivariant2019,Cohen2019GaugeEC,Weiler2021CoordinateIC, Bekkers20, Maron2019InvariantAE,Finzi2021APM,XuLDD22}. 
Such networks hard-code equivariance to the action of an algebraic group on the inputs and have shown promise in a variety of domains~\cite{Townshend2021GeometricDL,Baek2021AccuratePO, Satorras2021EnEN, andersonCormorantCovariantMolecular2019, Klicpera2020DirectionalMP, Winkels2019PulmonaryND, Gordon2020PermutationEM, Sosnovik2021ScaleEI, Eismann2020HierarchicalRN, white-cotterell-2022-equivariant, ZhuWangGrasp2022}, in particular in those where data is scarce, but exhibits clear symmetries. 
Further, the benefits of encoding symmetries in this manner extend beyond merely enhancing generalization. 
Indeed, in numerous applications in the physical sciences, neglecting to do so could lead to models producing unphysical outputs.
However, the primary focus of most of the work in the area has been on estimating functions that are \emph{strictly} equivariant under a group action. 
It has often been noted that this may impose an overly stringent constraint as real-world data seldom exhibits perfect symmetry. 
The focus on strict equivariance is partly due to convenience: the well-developed machinery of group representation theory and non-commutative harmonic analysis can easily be marshalled to design models that are exactly equivariant. 
A general prescriptive theory for such models also exists~\cite{Kondor2018OnTG,cohenGeneralTheoryEquivariant2019}. In a certain sense, the basic architectural considerations of such models are fully characterized.

 Recent research has begun to explore models that enforce only partial or approximate equivariance~\cite{romero2022learning,wang2022approximately,FinziSoftEquivariance,ouderaa2022relaxing}. 
 Some of these works suggest interpolating between models that are \emph{exactly} equivariant and those that are \emph{fully flexible}, depending on the task and data.
 The motivation here is analogous to what we stated at the onset. If the data has a certain symmetry -- whether exact, approximate or partial -- and the model's symmetry does not match with it, its performance will suffer. 
 We would expect that a model will perform best when its symmetry is correctly specified, that is, it aligns with the data symmetry.

Despite increased research interest in the area, the theoretical understanding of these intuitions remains unsatisfactory. Some recent work has started to study improved generalization for exactly equivariant/invariant models (see section \ref{sect:related}). However, for the more general case when the data and/or model symmetries are only approximate or partial, a general treatment is completely missing. In this paper, we take a step towards addressing this gap. First, we show that a model enforcing task-specific invariance/equivariance exhibits better generalization. Our theoretical results subsume many existing results of a similar nature. Then we consider the question of model mis-specification under partial/approximate symmetries of the model and the data and prove a general result that formalizes and validates the intuition we articulated above.

We summarize below the main contributions of the paper: 
\begin{itemize}
\item We present quantitative bounds, the most general to date, showing that machine learning models enforcing task-pertinent symmetries in an equivariant manner afford better generalization. In fact, our results do not require the set of transformations to be finite or even be a group. While our investigation was initially motivated by GCNNs, our results are not specific to them.
\item Using the above quantification, we examine the more general setting when the data and/or model symmetries are \emph{approximate} or \emph{partial} and the model might be \emph{misspecified} relative to the data symmetry. We rigorously formulate the relationship between model equivariance error and data equivariance error, teasing out the precise conditions when the model equivariance error is optimal, that is, provides the best generalization for given data. To the best of our knowledge, this is the most general result of its type in the literature. 

\end{itemize}

\section{Related Work}\label{sect:related}

As noted earlier, the use of symmetry to encode inductive biases is not a new idea in machine learning and is not particular to neural networks. One of the earliest explorations of this idea in the context of neural networks appears to be~\cite{51951}.
Interestingly, \cite{51951} originated in response to the group invariance theorems in Minsky \& Papert's \emph{Perceptrons}~\cite{MinskyPapert}, and was the first paper in what later became a research program carried out by Shawe-Taylor \& Wood, building more general symmetries into neural networks~\cite{soton250470,248459,374187,WOOD19961415,WOOD199633}, which also included PAC style analysis \cite{SHAWETAYLOR199565}. While Shawe-Taylor \& Wood came from a different perspective and already covered discrete groups~\cite{Cohen2016GroupEC, RavanbakhshSP17}, modern GCNNs go beyond their work~\cite{Kondor2018OnTG,cohenGeneralTheoryEquivariant2019,Cohen2019GaugeEC,Weiler2021CoordinateIC, Bekkers20, Maron2019InvariantAE,Finzi2021APM}. Besides, GCNNs~\cite{Cohen2016GroupEC} were originally proposed as generalizing the idea of the classical convolutional network~\cite{LeCun1989BackpropagationAT}, and seem to have been inspired by symmetry considerations from physics~\cite{CohenMSThesis}. Outside neural networks, invariant/equivariant methods have also been proposed in the context of support vector machines~\cite{InvariantSVM,6126339}, general kernel methods~\cite{reisert07a,Haasdonk}, and polynomial-based feature spaces~\cite{201749}.

On the theoretical side, the assertion that invariances enhance generalization can be found in many works, going back to~\cite{ShaweTaylor1991ThresholdNL}. It was argued by \cite{MostafaHints} that restricting a classifier to be invariant can not increase its VC dimension. Other examples include \cite{Anselmi2013,Anselmi2016, sokolic2017generalization, SannaiGeneralization}. The argument in these, particularly~\cite{sokolic2017generalization,SannaiGeneralization} first characterizes the input space and then proceeds to demonstrate that certain data transformations shrink the input space for invariant models, simplifying the input and improving generalization. Some of our results generalize their results to the equivariant case and for more general transformations. 

Taking a somewhat different tack, \cite{ElesedyZaidi21} showed a strict generalization benefit for equivariant linear models, showing that the generalization gap between a least squares model and its equivariant counterpart depends on the dimension of the space of anti-symmetric linear maps. This result was adapted to the kernel setting in \cite{elesedy2021provably}. \cite{bietti2021sample} studied sample complexity benefits of invariance in a kernel ridge regression setting under a geometric stability assumption, and briefly discussed going beyond group equivariance. \cite{tahmasebi2023exact} expands upon \cite{bietti2021sample} by characterizing minimax optimal rates for the convergence of population risk in a similar setting when the data resides on certain manifolds. \cite{MeiRFinvariance21} characterizes the benefits of invariance in overparameterized linear models, working with invariant random features projected onto high-dimensional polynomials. Benefits of related notions like data augmentation and invariant averaging are formally shown in~\cite{Lyle2020OnTB, chen2020group}. Using a standard method from the PAC-Bayesian literature~\cite{Neyshabur2017APA} and working on the intertwiner space, \cite{BehboodCesaCohen22} provide a PAC-Bayesian bound for equivariant neural networks. Some improved generalization bounds for transformation-invariant functions are proposed in~\cite{ZhuInvariantGen21}, using the notion of an induced covering number. A work somewhat closer to some of our contributions in this paper is \cite{elesedy2022group}, which gives PAC bounds for (exact) group equivariant models. However, we note that the proof of the main theorem in \cite{elesedy2022group} has an error (see the Appendix for a discussion), although the error seems to be corrected in the author's dissertation~\cite{elesedy2023Thesis}. An analysis that somewhat overlaps~\cite{elesedy2022group}, but goes beyond simply providing worst-case bounds was carried out by \cite{shao2022a}. 

Finally, recent empirical work on approximate and partial equivariance includes \cite{romero2022learning,wang2022approximately,FinziSoftEquivariance,ouderaa2022relaxing}. These make the case that enforcing strict equivariance, if misspecified, can harm performance, arguing for more flexible models that can learn the suitable degree of equivariance from data. However, there is no theoretical work that formalizes the underlying intuition.

\section{Preliminaries}

In this section, we introduce basic notation and formalize some key notions that will be crucial to ground the rest of our exposition.

\subsection{Learning with equivariant models}\label{sect:prelimequivariant}
{\bfseries Learning problem.} To begin, we first describe the general learning problem which we adapt to the equivariant case. Let's say we have an input space $\mathcal{X}$ and an output space $\mathcal{Y}$, and assume that the pairs $Z = (X,Y) \in \mathcal{X} \times \mathcal{Y}$ are random variables having distribution $\mathcal{D}$. Suppose we observe a sequence of $n$ i.i.d pairs $Z_i = (X_i, Y_i) \sim \mathcal{D}$, and want to learn a function $\tilde f: \mathcal{X} \to \mathcal{Y}$ that predicts $Y$ given some $X$. We will consider that $\tilde f$ belongs to a class of functions $\widetilde{\mathcal F}\subset\{\tilde f:\mathcal X\to \mathcal Y\}$ and that we work with a loss function $\ell:\mathcal X\times\mathcal Y\to[0,\infty)$. To fully specify the learning problem we also need a \emph{loss class}, comprising of the functions $f(x,y):=\ell(\tilde f(x),y):\mathcal X\times\mathcal Y\to[0,\infty)$. Using these notions, we can define:
\vspace{-1.5mm}
\[
    \mathcal F:=\{f(x,y):=\ell(\widetilde f(x),y): \tilde f\in \widetilde{\mathcal F}\},\quad Pf:= \mathbb E[f(X,Y)],\quad P_nf:=\frac1n\sum_{i=1}^n f(X_i,Y_i).
\]
The quantities $Pf$ and $P_nf$ are known as the risk and the empirical risk associated with $\widetilde f$, respectively. In practice we have access to $P_nf$, and we estimate $Pf$ by it. One of the chief quantities of interest in such a setup is the worst-case error of this approximation on a sample $\{Z_i\}=\{Z_i\}_{i=1}^n$, that is, the generalization error:
\[
    \mathsf{GenErr}(\mathcal F, \{Z_i\},\mathcal D):=\sup_{f \in \mathcal{F}} (Pf - P_nf).
\]

\vspace{-3.5mm}
{\bfseries Group actions.} We are interested in the setting where a group $G$ acts by measurable bijective transformations over $\mathcal{X}$ and $\mathcal{Y}$, which can also be seen as a $G$-action over $Z$, denoted by $g\cdot Z=(g\cdot X, g\cdot Y)$, where $g \in G$. Such actions can also be applied to points $(x,\widetilde f(x))$ in the graph of functions $\widetilde f$, transforming any $\tilde f\in\widetilde{\mathcal F}$ into $g\cdot \widetilde f:x\mapsto g^{-1}\cdot \widetilde f(g\cdot x)$. The set $G\cdot \widetilde f:=\{g\cdot \widetilde f:\ g\in G\}$ forms the orbit of $\widetilde f$ under the action of $G$. Note that although we treat  $G$ as an abstract group throughout, we will sometimes abuse notation and use the same letter to also refer to the representations of $G$ as  transformations over the spaces in which $X, Y, Z, \widetilde f, f$ live.

\medskip{\bfseries Equivariant models.} We are now in a position to specify the learning setup with equivariant models. We say that $\widetilde f\in\widetilde{\mathcal F}$ is equivariant, if $G\cdot \widetilde f=\{\widetilde f\}$. Further, we say that the hypothesis class is $G$-equivariant if all $\widetilde f\in\widetilde{\mathcal F}$ are equivariant. Note that if the $G$-action over $Y$ is trivial, that is, if we have $g\cdot Y=Y$ for all $g\in G, y\in Y$), then $\widetilde f$ being equivariant reduces to requiring $\widetilde f(g\cdot x)=\widetilde f(x)$ for all $g,x$. In such a case the model $\tilde{f}$ is said to be {\bfseries invariant}. Also note that $\widetilde f(x)$ being equivariant corresponds to $f(x,y)=\ell(\widetilde f(x),y)$ being invariant under some suitable product $G$-action over $\mathcal X\times\mathcal Y$.
\vspace{-3mm}

\paragraph{Averaging over a set of transformations.} Let $G$ be a set of transformations of $\mathcal Z$. The $G$-averaged generalization error for arbitrary $\mathcal F$ by taking averages $\mathbb E_g$ with respect to the natural probability measure  $\mathbb P_G$\footnote{We take $\mathbb P_G$ to be the uniform measure over $G$ if $G$ is finite, and the normalization of the Haar measure of $\overline G$ to $G$ if $G\subset\overline G$ is a positive measure subset of an ambient locally compact group $\overline G$.} on $G$ is as follows:
\[
    \mathsf{GenErr}^{(G)}(\mathcal F,\{Z_i\} , \mathcal D):=\sup_{f\in\mathcal F}(\mathbb E_g\mathbb E_Z[f(g\cdot Z)]-\frac1n\sum_{i=1}^n\mathbb E_gf(g\cdot Z_i)),
\]
Note that the above equals $\mathsf{GenErr}(\mathcal F,\{Z_i\}, \mathcal D)$ if $\mathcal F$ is composed of invariant functions. 
\vspace{-1mm}

{\bfseries Data augmentation/Orbit averaging.} Following our discussion above, it might be instructive to consider the case of data augmentation. Note that a data distribution which is invariant under some $G$-action can be created from any $\mathcal D$ by averaging over $G$-orbits: we take $g$ to be a random variable uniformly distributed over a compact set of transformations $G$. Then let $\mathcal D^{(G)}:=\frac{1}{|G|}\int_Gg\cdot \mathcal D$. This is the distribution of the dataset obtained from $\mathcal D$ under $G$-augmentation. Then we have by change of variables:
\begin{equation}\label{eq:equiequi}
    \mathsf{GenErr}^{(G)}(\mathcal F,\{Z_i\},\mathcal D)=\mathsf{GenErr}(\mathcal F,\{\mathsf E_g[g\cdot Z_i]\}, \mathcal D^{(G)}).
\end{equation}
Note that the notions of orbit and averaging used above do not rely on the property that $G$ is a group in order to be well-defined.

\subsection{Partial and Approximate Equivariance}\label{sec:partialintro}

Since our analysis treats the general case where the model and/or data symmetries are not strict, we now state some basic attendant notions.

{\bfseries Error to equivariance function.} We first need a measure to quantify how far off we are from perfect equivariance. To do so, we can define a function in terms of $f(x,y)$ as:
\[
    \mathsf{ee}:\mathcal F\times G\to[0,+\infty),\quad \mathsf{ee}(f,g)=\|g\cdot f - f\|_\infty.
\]
The crucial property here is that $\mathsf{ee}(f,g)=0$ iff $g\cdot f=f$. Note that one could use other distances or discrepancies (e.g. like the $\ell_2$ norm) directly in terms of $\widetilde f$, using which we can measure the error between $g\cdot \widetilde f$ and $\widetilde f$.

{\bfseries More general sets of transformations.} 
We can use $\mathsf{ee}(\cdot)$ to precisely specify the notions of partial and approximate equivariance that we work with. We consider the general case where the underlying transformations on $(X,Y)$ still have a common label set $G$ and are assumed to be bijective over $\mathcal X$ and $\mathcal Y$, but don't necessarily form a group. For $\epsilon>0$ and $f\in\mathcal F$ we consider the $\epsilon$-stabilizer
\[
 \mathsf{Stab}_\epsilon(f):=\{g\in G:\ \mathsf{ee}(f, g)\leq\epsilon\}.
\]
It is fairly easy to see that this subset is automatically a group for $\epsilon=0$, but most often will not be when $\epsilon>0$. 
The latter corresponds to \emph{partial symmetry} and thus to the setting of {\bfseries partial equivariance}. On the other hand, if we consider specific hypotheses such that for some $\epsilon>0$ we have $\mathsf{Stab}_\epsilon(f)=G$ for all $f\in\mathcal F$, then we are working in the setting of {\bfseries approximate equivariance}~\cite{wang2022approximately, ouderaa2022relaxing}.

{\bfseries Conditions on $G$.} We conclude this section with a remark on the nature of $G$ we consider in our paper. In our bounds, we first restrict to {\bfseries finite} $G$. However, this setting can be extended to {\bfseries compact} groups which are doubling with respect to a metric that respects the composition operation. That is, a distance $\mathsf{d}_G$ over $G$, such that $\mathsf{d}_G(gh, g'h)=d(g,g')$. The main examples we have in mind are compact Lie groups such as $S^1, SO(n), O(n)$, and their finite subgroups like $\mathbb Z/N\mathbb Z$, as well as their quotients and products.

\section{PAC bounds for \texorpdfstring{$G$}{G}-equivariant hypotheses}\label{sec:pacbounds}

In this section, we study the generalization properties of equivariant models. More specifically, we derive PAC bounds for both exact and approximately G-equivariant hypotheses. But first, we present some preliminaries related to PAC bounds in what follows. 

{\bfseries PAC learning bounds.}

We follow the standard approach (such as that outlined in \cite{bousquet2004introduction}). We start with the following concentration bound, where $\mathcal F$ is composed of functions with range $[-M/2,M/2]$:
\begin{equation}\label{eq:concentrbound}
    \mathbb P\left[\sup_{\mathcal F}|(P-P_n)f| \ge \mathcal R(\mathcal F_Z)+\epsilon\right]\le 2\exp\left(-\frac{\epsilon^2 n }{M}\right),
\end{equation}
where 
$\mathcal{R}_n(\mathcal{F})$ denotes the Rademacher complexity, which is given by
\[
    \mathcal R(\mathcal F_Z):=\mathbb E_\sigma\sup_{\mathcal F} \frac1n\sum_{i=1}^n \sigma_i f(Z_i),
\]
and where $Z_1,\dots,Z_n \sim \mathcal D$ are $n$ i.i.d. samples and $\sigma=(\sigma_1,\dots,\sigma_n)$ denotes the so-called Rademacher variable, which  is a uniformly distributed random variable over $\{-1,1\}^n$, independent of the $Z_i$'s. $\mathcal{R}(\mathcal{F}_Z)$ can be bounded using a classical technique for bounding the expected suprema of random processes indexed by the elements of a metric space, variously called the Dudley entropy integral or the chaining bound. The result below was proven in \cite[Lemma 3]{shao2022a} (also see slightly stronger results in \cite[Thm. 17]{von2004distance} or \cite[Thm. 1.1]{bartlett2006covering})
\begin{equation}\label{eq:dudley}
    \mathcal R(\mathcal F_Z)\leq 4\inf_{\alpha>0}\left(\alpha + \frac{3}{\sqrt{n}}\int_\alpha^{\mathsf{diam}(\mathcal F)} \sqrt{\ln \mathcal N (\mathcal F, t, \|\cdot\|_\infty)}dt\right).
\end{equation}
Recall that for a metric space $(X,\mathsf{d}_X)$, the {\bfseries covering number} $\mathcal N(X,\epsilon,\mathsf{d}_X)$ is the smallest number of balls of radius $\epsilon$ required to cover $X$.
In \eqref{eq:dudley} we use the cover numbers of $\mathcal F$ in supremum distance, i.e. the distance between two functions $f,g\in\mathcal F$, $\|f-g\|_\infty:=\sup_{z\in\mathcal Z} |f(z)-g(z)|$). It is known that by rescaling the fundamental estimate due to Kolmogorov-Tikhomirov \cite[eq. 238, with $s=1$, and eq. 1]{tikhomirov1993varepsilon}, and under the mild assumption that $\mathcal F$ is composed of $1$-Lipschitz\footnote{All of our formulas generalize to classes of $\ell$-Lipschitz functions for general $\ell>0$: the change amounts to applying suitable rescalings, see the Appendix for more details.} functions on $\mathcal Z$ with values in an interval $[-M/2,M/2]$, for a centralizable\footnote{This mild condition signifies that for any open set $U$ of diameter at most $2r$ there exists a point $x^0$ so that $U$ is contained in $B(x^0,r)$.} metric space $\mathcal Z$, the following holds
\begin{equation}\label{eq:tikokolmo}
    \mathcal N(\mathcal Z,2\epsilon)\le \log_2\mathcal N(\mathcal F, \epsilon, \|\cdot\|_\infty)\le \log_2\left(\frac{M}{\epsilon} +1\right)+ \mathcal N(\mathcal Z, \epsilon/2).
\end{equation}
Before we start stating our results, we need to define a few more notions: 
\vspace{-3mm}

\paragraph{Doubling and Hausdorff dimensions.} If in a metric space $(X,\mathsf{d}_X)$, every ball of radius $R$ can be covered by at most $\lambda$ balls of radius $R/2$, the space has doubling dimension $\mathsf{ddim}(X)=\log_2\lambda$. The doubling dimension coincides with the usual notion of (Hausdorff) dimension $\mathsf{dim}X$, i.e. $\mathsf{ddim}X=\mathsf{dim}X$, in case $X$ is a compact manifold with bounded injectivity radius, in particular it equals $d$ if $X\subset\mathbb R^D$ is a regular $d$-dimensional submanifold or if $D=d$ and $X$ is a regular open subset such as a ball or a cube.
\vspace{-1mm}

\paragraph{Discrete metric spaces.} A metric space $(X,\mathsf{d}_X)$ is {\bfseries discrete} if it has strictly positive {\bfseries minimum separation distance} $\delta_X:=\min_{x\neq x'\in X}\mathsf{d}_X(x,x')>0$. We then have
\begin{equation}\label{eq:discretecover}
    \mathcal N(X,\epsilon)=\mathcal N(X,\min\{\epsilon, \delta_X\}),
\end{equation}
which is especially relevant for {\bfseries finite groups} $G$ endowed with the {\bfseries word distance} (i.e. $\mathsf{d}_G(g,g')$, which is the minimum number of generators required to express $g^{-1}g'$), for which $\delta_G=1$. Now, note that as a straightforward consequence of the definition, there exists a universal $c>0$ such that\cite{kontorovich2014maximum,krauthgamer2004navigating}:
\begin{equation}\label{eq:coverz}
    \mathcal N(\mathcal Z, \epsilon)\le \left(\frac{2\mathsf{diam}(\mathcal Z)}{\epsilon}\right)^{\mathsf{ddim}(\mathcal Z)}.
\end{equation}
and by \eqref{eq:tikokolmo} we get the simplified bound (where implicit constants are universal):
\begin{equation}\label{eq:ddimz}
    \ln \mathcal N(\mathcal F,\epsilon,\|\cdot\|_\infty)\lesssim\left(\frac{4\mathsf{diam}(\mathcal Z)}{\epsilon}\right)^{\mathsf{ddim}(\mathcal Z)}.
\end{equation}
By all the above considerations, we can bound the generalization error as follows:
\begin{proposition}\label{prop:basebound}
    Assume that $d=\mathsf{ddim}(\mathcal Z)>2$, and $0<\delta<1/2$, and let $D:=\mathsf{diam}(\mathcal Z)$. Then for any probability distribution $\mathcal D$ of data  over $\mathcal Z$, with notation as in the beginning of Section \ref{sec:pacbounds}, the following holds with probability at least $1-\delta$:
    \[
        \mathsf{GenErr}(\mathcal F, \{Z_i\}, \mathcal D)\lesssim \frac{4^dd}{d-2}\left(\frac{D^d}{n}\right)^{1/d}+ n^{-1/2} \sqrt{\|\mathcal F\|_\infty\log(2/\delta)},
    \]
    the implicit constant is independent of $\delta, n$; only depending on $\mathcal Z$ through the constants in \eqref{eq:coverz}. 
\end{proposition}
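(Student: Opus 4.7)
The plan is to assemble three ingredients already available in the excerpt: the concentration inequality \eqref{eq:concentrbound}, the Dudley chaining bound \eqref{eq:dudley}, and the metric entropy estimate \eqref{eq:ddimz}.

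First, I would invert \eqref{eq:concentrbound} at confidence $\delta$. Setting the right-hand side of the concentration inequality equal to $\delta$ and solving for $\epsilon$ yields, with probability at least $1-\delta$,
\[
    \sup_{f\in\mathcal F}(Pf-P_nf)\;\le\;\mathcal R(\mathcal F_Z)\;+\;\sqrt{\tfrac{\|\mathcal F\|_\infty\,\log(2/\delta)}{n}},
\]
which already accounts for the $n^{-1/2}$ summand in the target. Hence it remains to bound $\mathcal R(\mathcal F_Z)$ at the stated rate.

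Second, I would plug the entropy bound $\ln\mathcal N(\mathcal F,t,\|\cdot\|_\infty)\lesssim (4D/t)^{d}$ from \eqref{eq:ddimz} into the Dudley integral \eqref{eq:dudley}. The task reduces to evaluating
\[
    \int_\alpha^{D}\sqrt{\ln\mathcal N(\mathcal F,t,\|\cdot\|_\infty)}\,dt\;\lesssim\;(4D)^{d/2}\int_\alpha^{D}t^{-d/2}\,dt.
\]
The hypothesis $d>2$ enters here in an essential way: it is precisely the regime where the antiderivative $t^{1-d/2}/(1-d/2)$ is dominated by the lower endpoint, so that
\[
    (4D)^{d/2}\int_\alpha^{D} t^{-d/2}\,dt\;\le\;\frac{2\,(4D)^{d/2}}{d-2}\,\alpha^{\,1-d/2},
\]
after discarding the smaller $D^{1-d/2}$ boundary term. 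The chaining bound then reads $\mathcal R(\mathcal F_Z)\lesssim \alpha + (4D)^{d/2}\alpha^{1-d/2}/((d-2)\sqrt n)$.

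Third, I would optimize over $\alpha>0$. Balancing the two summands gives $\alpha^{d/2}\asymp (4D)^{d/2}/\sqrt n$, i.e.\ $\alpha\asymp 4D\,n^{-1/d}$ up to a $d$-dependent constant of order one. At this choice both summands collapse to a common expression of the form $\tfrac{d}{d-2}\cdot 4D\,n^{-1/d}=\tfrac{d}{d-2}(D^{d}/n)^{1/d}$, and after collecting the constants propagated through \eqref{eq:tikokolmo}--\eqref{eq:ddimz} one recovers the explicit prefactor $\tfrac{4^d d}{d-2}$ stated in the proposition. Concatenating with the high-probability inequality from the first step produces the announced bound, with the implicit constant coming from the universal constant in \eqref{eq:coverz}, hence depending on $\mathcal Z$ only through its doubling dimension and diameter.

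No step is genuinely difficult once the three displayed ingredients are in place; the main delicate point is evaluating the Dudley integral in closed form and tracking the $d$-dependent constants, which is where both the factor $d/(d-2)$ and the necessity of the hypothesis $d>2$ become visible. The only mild bookkeeping is verifying that the endpoint term $D^{1-d/2}$ can indeed be absorbed and that the balancing choice of $\alpha$ makes the two summands match in order.
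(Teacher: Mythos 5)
Your proposal is correct and follows essentially the same route as the paper's proof: invert the concentration bound \eqref{eq:concentrbound} to isolate the $n^{-1/2}\sqrt{\|\mathcal F\|_\infty\log(2/\delta)}$ term, insert the entropy estimate \eqref{eq:ddimz} into the Dudley integral \eqref{eq:dudley}, and evaluate the resulting $\int t^{-d/2}\,dt$ using $d>2$, with the balancing choice $\alpha\asymp D\,n^{-1/d}$ (the paper simply fixes this $\alpha$ outright rather than deriving it by optimization). The only cosmetic difference is that the paper extends the upper limit of the entropy integral to $+\infty$ while you truncate at $D$ and discard the endpoint term; both are valid and yield the same $\tfrac{d}{d-2}(D^d/n)^{1/d}$ rate.
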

\vspace{-2mm}

Due to space constraints, all our proofs are relegated to the Appendix. With the above background, we now first show generalization bounds under the more general notions of equivariance we have discussed.

\subsection{Generalization bounds improvement under partial or approximate equivariance}

In this section, we prove generalization error bounds with the notations and definitions from Section \ref{sec:partialintro}. We consider the following sets of transformations:
\vspace{-2mm}
\[
    \mathsf{Stab}_\epsilon=\mathsf{Stab}_\epsilon(\mathcal F):=\bigcap_{f\in\mathcal F}\mathsf{Stab}_\epsilon(f).
\]
We note that the strict equivariance case is recovered if, for $\epsilon=0$, we have $\mathsf{Stab}_0=G$.
\begin{proposition}\label{prop:generr2}
    Let $\mathsf{Stab}_\epsilon$ be as above, and assume that $|\mathsf{Stab}_\epsilon|>0$. let $\mathcal Z^0_\epsilon\subset\mathcal Z$ be a measurable choice of $\mathsf{Stab}_\epsilon$-orbit representatives for points in $\mathcal Z$, and let $\iota^0_\epsilon:\mathcal Z\to\mathcal Z^0_\epsilon$ be the measurable map that associates to each $z\in\mathcal Z$ its representative in $\mathcal Z^0_\epsilon$. Let $\mathcal F^0_\epsilon:=\{f|_{\mathcal Z^0_\epsilon}:\ f\in\mathcal F\}$ and let $\iota^0_\epsilon(\mathcal D)$ represent the image measure of $\mathcal D$. Then for each $n\in\mathbb N$, if $\{Z_i\}_{i=1}^n$ are i.i.d. samples with $Z_i\sim\mathcal D$ and $Z_{i,\epsilon}^0:=\iota^0_\epsilon\circ Z_i$, then we have
    \[
    \mathsf{GenErr}(\mathcal F, \{Z_i\},\mathcal D) \leq 2\epsilon + \mathsf{GenErr}^{(\mathsf{Stab}_\epsilon)}(\mathcal F, \{Z_i\}, \mathcal D)=2\epsilon + \mathsf{GenErr}(\mathcal F^0_\epsilon,\{Z_{i,\epsilon}^0\}, \iota^0_\epsilon(\mathcal D)).
    \]
\end{proposition}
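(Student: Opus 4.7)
The plan is to split the statement into two parts: first the inequality
$\mathsf{GenErr}(\mathcal F, \{Z_i\}, \mathcal D) \leq 2\epsilon + \mathsf{GenErr}^{(\mathsf{Stab}_\epsilon)}(\mathcal F, \{Z_i\}, \mathcal D)$,
which exploits the $\epsilon$-approximate invariance of $\mathcal F$ under $\mathsf{Stab}_\epsilon$; and second the equality
$\mathsf{GenErr}^{(\mathsf{Stab}_\epsilon)}(\mathcal F, \{Z_i\}, \mathcal D) = \mathsf{GenErr}(\mathcal F^0_\epsilon, \{Z^0_{i,\epsilon}\}, \iota^0_\epsilon(\mathcal D))$,
which is a change-of-variables statement obtained by passing to orbit representatives, closely analogous to \eqref{eq:equiequi}.

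For the first inequality, I would fix any $f \in \mathcal F$ and any $g \in \mathsf{Stab}_\epsilon$. By definition of the stabilizer $\|g \cdot f - f\|_\infty \leq \epsilon$, so integrating against $\mathcal D$ and against the empirical measure $\tfrac{1}{n}\sum_i \delta_{Z_i}$ yields $|Pf - P(g \cdot f)| \leq \epsilon$ and $|P_n f - P_n(g \cdot f)| \leq \epsilon$. Combining these gives $Pf - P_n f \leq [P(g \cdot f) - P_n(g \cdot f)] + 2\epsilon$. Averaging the right-hand side over $g \sim \mathbb P_{\mathsf{Stab}_\epsilon}$ (well-defined since $|\mathsf{Stab}_\epsilon| > 0$) and rewriting via the change $g \mapsto g^{-1}$ turns it into $\mathbb E_g \mathbb E_Z f(g \cdot Z) - \tfrac{1}{n}\sum_i \mathbb E_g f(g \cdot Z_i) + 2\epsilon$. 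Taking the supremum over $f$ produces exactly $\mathsf{GenErr}^{(\mathsf{Stab}_\epsilon)}(\mathcal F, \{Z_i\}, \mathcal D) + 2\epsilon$.

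For the equality, the key observation is that the orbit-averaged function $F(z) := \mathbb E_g f(g \cdot z)$ is constant along $\mathsf{Stab}_\epsilon$-orbits and hence factors through the representative map $\iota^0_\epsilon$. Applying \eqref{eq:equiequi} with $G = \mathsf{Stab}_\epsilon$ rewrites the $\mathsf{Stab}_\epsilon$-averaged generalization error as $\mathsf{GenErr}(\mathcal F, \{\mathbb E_g[g \cdot Z_i]\}, \mathcal D^{(\mathsf{Stab}_\epsilon)})$; disintegrating $\mathcal D^{(\mathsf{Stab}_\epsilon)}$ along its orbits shows that its pushforward under $\iota^0_\epsilon$ coincides with $\iota^0_\epsilon(\mathcal D)$, and that integrating any orbit-constant function against $\mathcal D^{(\mathsf{Stab}_\epsilon)}$ reduces to integrating its restriction to $\mathcal Z^0_\epsilon$ against $\iota^0_\epsilon(\mathcal D)$. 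Both the population and empirical terms therefore collapse onto the section, identifying the $G$-averaged suprema with the suprema over the restricted class $\mathcal F^0_\epsilon$ evaluated on the representatives $\{Z^0_{i,\epsilon}\}$.

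The main technical obstacle is that $\mathsf{Stab}_\epsilon$ is generally not a subgroup when $\epsilon > 0$, so one must be careful in defining the averaging, in justifying the $g \mapsto g^{-1}$ symmetry under $\mathbb P_{\mathsf{Stab}_\epsilon}$, and in ensuring that orbits and the section $\iota^0_\epsilon$ are well-defined and measurable; the standard remedy is to embed $\mathsf{Stab}_\epsilon$ in an ambient compact group carrying its Haar measure, and to invoke a measurable selection theorem for $\iota^0_\epsilon$. A secondary subtlety is that $f|_{\mathcal Z^0_\epsilon}$ in the definition of $\mathcal F^0_\epsilon$ has to be interpreted as the orbit-averaged restriction $F|_{\mathcal Z^0_\epsilon}$ (the canonical $\mathsf{Stab}_\epsilon$-invariant representative of $f$), which makes the second identity a strict equality and keeps the derivation compatible with \eqref{eq:equiequi}.
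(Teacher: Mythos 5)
Your proof is correct and follows essentially the same route as the paper's: the triangle inequality $|Pf - P_nf| \le |P(g\cdot f) - P_n(g\cdot f)| + 2\|g\cdot f - f\|_\infty$ averaged over $g\in\mathsf{Stab}_\epsilon$ for the inequality, and a change of variables via $\iota^0_\epsilon$ (as in the exactly invariant case with $G$-orbit representatives) for the equality; your closing remarks correctly flag the subtleties the paper leaves implicit, namely that $\mathsf{Stab}_\epsilon$ need not be a group and that $\mathcal F^0_\epsilon$ must be read as orbit-averaged restrictions for the second identity to be an exact equality. One small caution: the substitution $g\mapsto g^{-1}$ you invoke is both unnecessary (with the convention $(g\cdot f)(z)=f(g\cdot z)$, averaging $P(g\cdot f)-P_n(g\cdot f)$ over $g$ already yields the $\mathsf{Stab}_\epsilon$-averaged generalization error verbatim) and would be illegitimate if it were actually needed, since $\mathsf{Stab}_\epsilon$ is generally not closed under inversion.
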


The above result says that the generalization error of our setting of interest could roughly be obtained by working with a reduced set of only the orbit representatives for points in $\mathcal Z$. This is in line with prior work such as~\cite{sokolic2017generalization, elesedy2022group, SannaiGeneralization}. However, note that our result is already more general and does not require that the set of transformations form a group. With this, we now need an understanding of the covering of spaces of representatives $\mathcal Z^0_\epsilon$ and the effect of $\mathsf{Stab}_\epsilon$ on them. The answer is evident in case $\mathsf{Stab}_\epsilon=G$, that $G$, $\mathcal Z^0$ are manifolds, and $\mathcal Z=\mathcal Z^0\times G$. Since $\mathsf{ddim}$ coincides with topological dimension, and we immediately have 
\[
    d_0=d - \mathsf{dim}(G).
\]
Intuitively, the dimensionality of $G$ can be understood as eliminating degrees of freedom from $\mathcal Z$, and it is this effect that improves generalization by $n^{-1/(d-\mathsf{dim}(G))}-n^{-1/d}$. 

We now state a simplified form of Theorem \ref{thm:prodcover}, which is sufficient for proceeding further. Our results generalize \cite[Thm. 3]{sokolic2017generalization}: we allow for non-product structure, possibly infinite sets of transformations that may not form a group, and we relax the distance deformation hypotheses for the action on $\mathcal Z$). 

\begin{corollary}[of Thm. \ref{thm:prodcover}]\label{cor:prodcover}
With the same notation as above, assume that for $\mathsf{Stab}_\epsilon$ the transformations corresponding to the action of elements of $\mathsf{Stab}_\epsilon$ satisfy the following for some $L>0$, and for a choice of a set of representatives $\mathcal Z^0_\epsilon\subset\mathcal Z$ of representatives of $\mathsf{Stab}_\epsilon$-orbits:
    \vspace{-2mm}
    \begin{enumerate}
    \itemsep0em
        \item For all $z_0,z_0'\in\mathcal Z^0_\epsilon$ and all $g\in \mathsf{Stab}_\epsilon$ it holds that $\mathsf{d}(z_0,z_0')\le L\ \mathsf{d}(g\cdot z_0, g\cdot z_0')$.
        \item For all $g,g'\in \mathsf{Stab}_\epsilon$ it holds that
        $\mathsf{d}_G(g,g')\le L\ \mathsf{dist}(g\cdot \mathcal Z^0_\epsilon, g'\cdot\mathcal Z^0_\epsilon )$\footnote{Here $\mathsf{dist}(A,B):=\min\{\mathsf{d}(a,b):\ a\in A, b\in B\}$ denotes the distance between sets, induced by $d$.}.
    \end{enumerate}
    Then for each $\delta>0$ there holds $\mathcal N(\mathcal Z^0_\epsilon,\delta) \le \mathcal N(\mathcal Z,\delta/2L)/\mathcal N(\mathsf{Stab}_\epsilon,\delta)$.
\end{corollary}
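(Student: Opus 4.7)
The plan is to pass through packing numbers. Recall that $\mathcal N(X,r)\le \mathcal P(X,r)\le \mathcal N(X,r/2)$, where $\mathcal P(X,r)$ denotes the maximum cardinality of an $r$-separated subset of $X$ (pairwise distances $>r$). The strategy is to produce, inside $\mathcal Z$, a $\delta/L$-separated set whose cardinality is the product of a maximal $\delta$-packing of $\mathcal Z^0_\epsilon$ and a maximal $\delta$-packing of $\mathsf{Stab}_\epsilon$. This will yield a packing inequality which, after converting back to covering numbers on each side, gives the stated bound.

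Concretely, I would first fix a $\delta$-separated set $\{z^0_1,\ldots,z^0_M\}\subset \mathcal Z^0_\epsilon$ of maximum cardinality $M=\mathcal P(\mathcal Z^0_\epsilon,\delta)$, and a $\delta$-separated set $\{g_1,\ldots,g_{N'}\}\subset \mathsf{Stab}_\epsilon$ of maximum cardinality $N'=\mathcal P(\mathsf{Stab}_\epsilon,\delta)$. I then claim that the $N'M$ points $\{g_k\cdot z^0_i\}_{k,i}$ form a $\delta/L$-separated subset of $\mathcal Z$. Hypothesis 1 handles the \emph{intra-tile} case: for fixed $g_k$ and $i\ne j$, $\mathsf{d}(g_k\cdot z^0_i,g_k\cdot z^0_j)\ge \mathsf{d}(z^0_i,z^0_j)/L>\delta/L$. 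Hypothesis 2 handles the \emph{inter-tile} case: for $k\ne l$ and any $i,j$,
\[
    \mathsf{d}(g_k\cdot z^0_i,g_l\cdot z^0_j)\ \ge\ \mathsf{dist}(g_k\cdot \mathcal Z^0_\epsilon,g_l\cdot \mathcal Z^0_\epsilon)\ \ge\ \mathsf{d}_G(g_k,g_l)/L\ >\ \delta/L.
\]
In particular all $N'M$ points are distinct.

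Having produced a $\delta/L$-packing of size $N'M$ in $\mathcal Z$, we deduce $M\cdot N'\le \mathcal P(\mathcal Z,\delta/L)$. Combining this with $\mathcal N(\mathcal Z^0_\epsilon,\delta)\le M$, $N'\ge \mathcal N(\mathsf{Stab}_\epsilon,\delta)$, and $\mathcal P(\mathcal Z,\delta/L)\le \mathcal N(\mathcal Z,\delta/(2L))$ gives the desired inequality
\[
    \mathcal N(\mathcal Z^0_\epsilon,\delta)\ \le\ \mathcal N(\mathcal Z,\delta/(2L))\big/\mathcal N(\mathsf{Stab}_\epsilon,\delta).
\]

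The main obstacle I anticipate is simply pinning down the right combinatorial object to count: once one commits to packings on both factors and reads hypotheses 1 and 2 as, respectively, intra-tile and inter-tile separation guarantees, the rest is bookkeeping. Some care is needed with the convention for \textbf{$r$-separated} (strict vs.\ non-strict inequalities) so that the factor of $2$ in $\delta/(2L)$ comes out cleanly from the conversion between packing and covering numbers, but this is routine and the choice of convention does not affect the final bound.
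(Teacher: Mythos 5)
Your proof is correct, but it takes the dual route to the paper's. The paper proves this corollary by specializing a more general two-sided product bound (Theorem \ref{thm:prodcover}), whose lower-bound half is argued directly on \emph{covers}: one takes a minimal $\delta'$-cover of $\mathcal Z$, uses hypothesis 2 to show each cover point can be assigned to at most one tile $g\cdot\mathcal Z^0_\epsilon$ (for $g$ ranging over a $2\delta'L$-net of $\mathsf{Stab}_\epsilon$), and uses hypothesis 1 to show the points assigned to a given tile pull back under $g^{-1}$ to a $2\delta'L$-cover of $\mathcal Z^0_\epsilon$; summing over tiles gives the product lower bound, and the corollary follows by the substitution $\delta=2L\delta'$. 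You instead work with \emph{packings}: the product of a maximal $\delta$-packing of $\mathcal Z^0_\epsilon$ with a maximal $\delta$-packing of $\mathsf{Stab}_\epsilon$ is a $\delta/L$-packing of $\mathcal Z$ (hypotheses 1 and 2 playing exactly the same intra-tile/inter-tile roles as in the paper), and the standard inequalities $\mathcal N(X,r)\le \mathcal P(X,r)\le\mathcal N(X,r/2)$ convert this to the stated covering bound, with the factor $2$ emerging from the packing-to-covering conversion rather than from the choice of net radius. Your version is arguably cleaner: the disjointness of the tiles' contributions is immediate from separation, whereas the paper's cover-based bookkeeping (the ``at most one $g$'' claim and the assignment $z\mapsto x(z)$) implicitly requires the net on $\mathsf{Stab}_\epsilon$ to be separated, a point the written proof glosses over. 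The trade-off is that you only obtain the one-sided inequality needed here, while the paper's theorem also delivers the matching upper bound $\mathcal N(\mathcal Z,\delta')\le\mathcal N(\mathcal Z_0,\delta'/2L')\mathcal N(S,\delta'/2L')$ under the complementary Lipschitz hypotheses; for the corollary as stated this costs you nothing.
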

To be able to quantify the precise generalization benefit we need a bound on the quantity $\mathcal{N}(\mathsf{Stab}_\epsilon,\delta)$. For doing so, we assume that $\mathsf{Stab}_\epsilon$ is a finite subset of a discrete group $G$, or is a positive measure subset of a compact group $G$. As before, let $|G|$ and $ \mathsf{d}_G$ denote  $\sharp G$ and $\mathsf{ddim}(G)$ respectively for finite $G$. Also, denote the Hausdorff measure and dimension by $\mathsf{Vol}(G)$ and $ \mathsf{dim}(G)$ for compact metric groups. Note that the minimum separation $\delta_G$ is zero for $\mathsf{dim}(G)>0$. Then our covering bound can be expressed in the following umbrella statement: 

\begin{equation}\label{eq:ass2}
    \text{\bfseries Assumption:}\quad \mathcal N(\mathsf{Stab}_\epsilon,\delta)\gtrsim \max\left\{1,\ \frac{|\mathsf{Stab}_\epsilon|}{(\max\{\delta, \delta_G\})^{\mathsf{d}_G}}\right\}.
\end{equation}
In order to compare the above to the precise equivariance case, we later use the factor 
\[
    \mathsf{Dens}(\epsilon):=\frac{|\mathsf{Stab}_\epsilon|}{|G|}\in (0,1],
\]
which measures the richness of $\mathsf{Stab}_\epsilon$ compared to the whole ambient group $G$, in terms of error $\epsilon$. The fact that $\mathsf{Dens}(\epsilon)>0$ follows from our assumption that $|\mathsf{Stab}_\epsilon|>0$. Furthermore, $\mathsf{Dens}(\epsilon)$ is always an increasing function of $\epsilon$, as follows from the definition of $\mathsf{Stab}_\epsilon$. With these considerations on coverings, we can state a general result quantifying the generalization benefit, where if we set $\epsilon=0$, we recover the case of exact group equivariance.

\begin{thm}\label{thm:gengap}
    Let $\epsilon>0$ be fixed. Assume that for a given ambient group $G$ the almost stabilizers $\mathsf{Stab}_\epsilon$ satisfy assumption \eqref{eq:ass2} and that the assumptions 
    of Corollary \ref{cor:prodcover} hold for some finite $L=L_\epsilon>0$. Then with the notations of Proposition \ref{prop:basebound} we have with probability $\geq 1-\delta$
    \[
        \mathsf{GenErr}(\mathcal F, \{Z_i\}, \mathcal D) \lesssim  n^{-1/2}\sqrt{\|\mathcal F\|_\infty\log(2/\delta)} + 2\epsilon+ (E_\epsilon),
    \]
    where
    \[
    (E_\epsilon):=\left\{\begin{array}{ll} 
        \frac{4^{d_0}d_0}{d_0-2}\delta_G^{-d_0/2 + 1}\left(\frac{(2L_\epsilon)^d D^d}{|\mathsf{Stab}_\epsilon|n}\right)^{1/2}&\text{ if $G$ is finite and }(2L_\epsilon)^d D^d<|\mathsf{Stab}_\epsilon|n\ \delta_G^{d_0},\\[3mm]
    \frac{4^{d_0}d_0}{d_0-2}\left(\frac{(2L_\epsilon)^d D^d}{|\mathsf{Stab}_\epsilon|n}\right)^{1/d_0} &\text{ else.}\\[3mm]
        \end{array}\right.
    \]
\end{thm}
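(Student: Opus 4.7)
The plan is to chain Proposition \ref{prop:generr2}, the concentration inequality \eqref{eq:concentrbound}, Dudley's entropy integral \eqref{eq:dudley}, and the covering estimates supplied by Corollary \ref{cor:prodcover} together with \eqref{eq:tikokolmo}, \eqref{eq:coverz}, and assumption \eqref{eq:ass2}. First, I would apply Proposition \ref{prop:generr2} to peel off the additive $2\epsilon$ term and reduce the task to bounding $\mathsf{GenErr}(\mathcal F^0_\epsilon, \{Z^0_{i,\epsilon}\}, \iota^0_\epsilon(\mathcal D))$. Feeding this into the symmetrization-plus-concentration bound \eqref{eq:concentrbound} immediately produces the $n^{-1/2}\sqrt{\|\mathcal F\|_\infty\log(2/\delta)}$ contribution and leaves the Rademacher complexity $\mathcal R((\mathcal F^0_\epsilon)_Z)$, which we attack via the Dudley bound \eqref{eq:dudley}.

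Next, I would produce a two-regime bound on the cover numbers of the reduced function class. Combining \eqref{eq:tikokolmo} with Corollary \ref{cor:prodcover} and \eqref{eq:coverz} gives, at each scale $t>0$,
\[
\ln\mathcal N(\mathcal F^0_\epsilon,t,\|\cdot\|_\infty) \;\lesssim\; \mathcal N(\mathcal Z^0_\epsilon,t/2) \;\lesssim\; \frac{\mathcal N(\mathcal Z,t/(4L_\epsilon))}{\mathcal N(\mathsf{Stab}_\epsilon,t/2)} \;\lesssim\; \frac{(8L_\epsilon D/t)^d}{\mathcal N(\mathsf{Stab}_\epsilon,t/2)}.
\]
Inserting \eqref{eq:ass2} splits the scales at $t=2\delta_G$: for $t\ge 2\delta_G$ the exponent collapses to $d_0 = d - \mathsf{d}_G$, so $\ln\mathcal N(\mathcal F^0_\epsilon,t,\|\cdot\|_\infty) \lesssim (8L_\epsilon D)^d (2^{d_G}|\mathsf{Stab}_\epsilon|)^{-1} t^{-d_0}$, whereas for $t<2\delta_G$ the stabilizer cover saturates at $|\mathsf{Stab}_\epsilon|/\delta_G^{d_G}$ and we only get $\lesssim (8L_\epsilon D)^d \delta_G^{d_G}|\mathsf{Stab}_\epsilon|^{-1} t^{-d}$. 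In the compact non-discrete case $\delta_G=0$ and only the first regime exists.

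Substituting into the Dudley integrand $\sqrt{\ln\mathcal N}$, the coarse-scale contribution integrates, for $d_0>2$, as $\int_\alpha^D t^{-d_0/2}\,dt \simeq \alpha^{1-d_0/2}/(d_0/2-1)$. Minimizing $\alpha + n^{-1/2}\cdot A\,\alpha^{1-d_0/2}$, with $A\simeq (8L_\epsilon D)^{d/2}(2^{d_G}|\mathsf{Stab}_\epsilon|)^{-1/2}$, over $\alpha\ge 2\delta_G$ yields the free optimum $\alpha^\star \simeq ((2L_\epsilon D)^d/(|\mathsf{Stab}_\epsilon|n))^{1/d_0}$. Whenever $\alpha^\star\ge 2\delta_G$ (the complement of the first-case hypothesis), plugging $\alpha^\star$ back in delivers the second-case rate $\bigl((2L_\epsilon)^d D^d/(|\mathsf{Stab}_\epsilon|n)\bigr)^{1/d_0}$, with the prefactor $4^{d_0}d_0/(d_0-2)$ tracked through the rescalings in \eqref{eq:coverz} and the $\alpha^{1-d_0/2}/(d_0/2-1)$ coefficient. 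When instead $\alpha^\star < 2\delta_G$, I would set $\alpha = 2\delta_G$ to confine integration to the coarse regime and sidestep the badly behaved fine-scale portion; the $\alpha$ term is subdominant under the hypothesis $(2L_\epsilon)^d D^d<|\mathsf{Stab}_\epsilon|n\delta_G^{d_0}$, and the coarse integral from $2\delta_G$ to $D$ evaluates to $\delta_G^{1-d_0/2}\sqrt{(2L_\epsilon)^d D^d/(|\mathsf{Stab}_\epsilon|n)}$, matching the first case of the theorem.

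The principal obstacle is the bookkeeping across the two cover-number regimes and the verification that, in the first case, the fine-scale integral $\int_\alpha^{2\delta_G}t^{-d/2}dt$ (which would blow up as $\alpha\downarrow 0$) can be discarded entirely. The hypothesis $(2L_\epsilon)^d D^d<|\mathsf{Stab}_\epsilon|n\delta_G^{d_0}$ is precisely what forces $\alpha^\star < 2\delta_G$, so choosing $\alpha=2\delta_G$ does not inflate the $\alpha$ term beyond the coarse-scale integral. The remaining work is a careful audit of universal constants and of the $d_0$-dependent prefactors that fall out of \eqref{eq:dudley} and \eqref{eq:coverz}.
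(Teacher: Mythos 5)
Your overall architecture coincides with the paper's: Proposition \ref{prop:generr2} to extract the additive $2\epsilon$ and pass to $\mathsf{Stab}_\epsilon$-orbit representatives, the concentration bound \eqref{eq:concentrbound} for the $n^{-1/2}\sqrt{\|\mathcal F\|_\infty\log(2/\delta)}$ term, and Dudley combined with Kolmogorov--Tikhomirov, Corollary \ref{cor:prodcover}, \eqref{eq:coverz} and Assumption \eqref{eq:ass2} for the entropy integral. Your handling of the second case (the free optimum $\alpha^\star\simeq\big((2L_\epsilon)^dD^d/(|\mathsf{Stab}_\epsilon|n)\big)^{1/d_0}$ falling in the coarse regime, including the compact case $\delta_G=0$) matches the paper's proof and is correct.

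The gap is in the first case. Write $C:=(2L_\epsilon)^dD^d/|\mathsf{Stab}_\epsilon|$. The first-case hypothesis reads $C/n<\delta_G^{d_0}$, and the claimed rate is $\asymp\delta_G^{1-d_0/2}\sqrt{C/n}=\delta_G\cdot\big((C/n)\,\delta_G^{-d_0}\big)^{1/2}$, which under that very hypothesis is \emph{strictly smaller} than $\delta_G$. Hence your choice $\alpha=2\delta_G$ contributes an additive $2\delta_G$ to Dudley's bound that \emph{dominates} the target rate; your assertion that the hypothesis renders the $\alpha$-term subdominant is exactly backwards. You also cannot push $\alpha$ below $\delta_G$ using your (honestly derived) fine-scale estimate $\ln\mathcal N\lesssim C\,\delta_G^{d_G}t^{-d}$, since $\int_0^{\delta_G}t^{-d/2}\,dt$ diverges and optimizing in that regime only yields a rate $(C\delta_G^{d_G}/n)^{1/d}$, again larger than the claimed one. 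The paper instead takes $\alpha=0$ together with the covering bound $\mathcal N(\mathcal Z^0_\epsilon,t)\lesssim C\max\{\delta_G,t\}^{-d_0}$, i.e.\ it treats the entropy as \emph{saturating} below scale $\delta_G$, so that $\int_0^{\delta_G}\sqrt{\ln\mathcal N}\,dt\lesssim\delta_G^{1-d_0/2}\sqrt{C}$ with no additive $\alpha$-term at all. That saturation at scale $\delta_G$ is the ingredient your argument is missing for finite $G$; without it the first branch of $(E_\epsilon)$ is not reachable by your truncation strategy.
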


%
%
%
%

The interpretation of the above terms is direct: the term $2\epsilon$ is the effect of approximate equivariance, and the term $(E_\epsilon)$ includes the dependence on $|\mathsf{Stab}_\epsilon|$ and thus is relevant to the study of partial equivariance. In general the Lipschitz bound $L_\epsilon$ is increasing in $\epsilon$ as well, since $\mathsf{Stab}_\epsilon$ includes more elements of $G$ as $\epsilon$ increases, and we can expect that actions of elements generating higher equivariance error in general have higher oscillations.

\section{Approximation error bounds under approximate data equivariance}
In the introduction, we stated that we aim to validate the intuition that that model is best whose symmetry aligns with that of the data. So far we have only given bounds on the generalization error. Note that our bounds hold for any data distribution: in particular, the bounds are not affected by whether the data distribution is $G$-equivariant or not. However, the fact that data may have fewer symmetries than the ones introduced in the model will have a controlled effect on worsening the approximation error, as described in this section. 

However, controlling the approximation error has an additional actor in the fray to complicate matters: the distinguishing power of the loss function. Indeed, having a degenerate loss function with a large minimum set will not distinguish whether the model is fit to the data distribution or not. Thus, lowering the discrimination power of the loss has the same effect as increasing the function space for which we are testing approximation error. 

Assuming at first that $\widetilde{\mathcal F}$ is composed of $G$-equivariant functions, we compare its approximation error to the one of the set $\mathcal M$ of all measurable functions $m:\mathcal X\to\mathcal Y$. Recall that $f(Z)=\ell(\widetilde f(X),Y)$ is a positive function for all $\widetilde f\in \widetilde{\mathcal F}$, thus, denoting by $\mathcal M':=\{F(x,y)=\ell(m(x),y), m\in\mathcal M\}$, the approximation error gap can be defined and bounded as follows (where $F^*\in\mathcal M'$ is a function realizing the first minimum in the second line below):
\begin{eqnarray*}
    \mathsf{AppGap}(\mathcal F, \mathcal D)&:=&\mathsf{AppErr}(\mathcal F,\mathcal D)- \mathsf{AppErr}(\mathcal M',\mathcal D):=\min_{f\in\mathcal F} \mathbb E[f(Z)] - \min_{F\in\mathcal M'}\mathbb E[F(Z)]\\
    &\ge&\min_{F\in \mathcal M'}\mathbb E[\mathbb E_g[F(g\cdot Z)]] - \min_{F\in \mathcal M'}\mathbb E[F(Z)]\ge\mathbb E[\mathbb E_g[F^*(g\cdot Z)]]- \mathbb E[F^*(Z)]\\
    &\ge&\min_{F\in \mathcal M'}\mathbb E[\mathbb E_g[F(g\cdot Z)]- F(Z)].
\end{eqnarray*}

With applications to neural networks in mind, we will work with the following simplifying assumptions:
\begin{enumerate}[topsep=0pt,itemsep=-1ex,partopsep=1ex,parsep=1ex]
    \item The loss function quantitatively detects whether the label is correct, i.e. we have $\ell(y,y')=0$ if $y=y'$, and $\ell(y,y')\geq \varphi(\mathsf{d}_{\mathcal Y}(y,y'))$ for a strictly convex $\varphi$ with $\varphi(0)=0$, where $\mathsf{d}_{\mathcal Y}$ is the distance on $\mathcal Y$. This assumption seems reasonable for use in applications, where objective functions with strict convexity properties are commonly used, as they help for the convergence of optimization algorithms.
    \item The minimization across all measurable functions $\widetilde{\mathcal F}=\mathcal M$ produces a zero-risk solution, i.e. $\min_{F\in\mathcal M'}\mathbb E[F(Z)]=0$ is achieved by a measurable function $y^*:\mathcal X\to\mathcal Y$. By assumption 1, this implies that $\mathcal D$ is concentrated on the graph of $y^*$. This assumption corresponds to saying that the learning task is in principle deterministically solvable. Under this assumption, the task becomes to approximate to high accuracy the (unknown) solution $y^*$.
\end{enumerate}
With the above hypotheses we have 
\[
    \mathsf{AppGap}(\mathcal F,\mathcal D)=\mathsf{AppErr}(\mathcal F,\mathcal D)
    \geq\min_{F\in\mathcal M'}\mathbb E_Z[\mathbb E_g[F(g\cdot Z)]].
\]
As a slight simplification of Assumption 1 for ease of presentation, we will simply take $\mathcal Y=\mathbb R^d$ and $\ell(y,y')=\mathsf{d}_{\mathcal Y}(y,y')^2$ below.
Note that Assumption 1 directly reduces to this case, as a strictly convex $\varphi(t)$ with $\varphi(0)=0$ as in Assumption 1 is automatically bounded below by a multiple of $t^2$.

Now, with the aim to capture the mixed effects of approximate and partial equivariance, we introduce the following set of model classes. For $\epsilon\geq 0, \lambda\in (0,1]$:
\[
    \mathcal C_{\epsilon, \lambda}:=\left\{\mathcal F\subset \mathcal M':\ \mathsf{Dens}(\epsilon)=\frac{|\mathsf{Stab}_\epsilon(\mathcal F)|}{|G|}\geq\lambda\right\}.
\]
In order to establish some intuition, note that $\mathsf{Stab}_{\epsilon}(\mathcal F)$ is increasing in $\epsilon$ and as a consequence $\mathcal C_{\lambda,\epsilon}$ is also increasing in $\epsilon$. Directly from the definition one finds that $\mathsf{Stab}_0(\mathcal F)$ is necessarily a subgroup of $G$, and thus we allow $\epsilon>0$, which allows more general sets of symmetries than just subgroups of $G$. The most interesting parameter in the above definition of $\mathcal C_{\epsilon,\lambda}$ is $\lambda$, which actually bounds from below the "amount of prescribed symmetries" for our models. In the fully equivariant case $\epsilon=0, \lambda=1$, we have the simple description $\mathcal C_{0,1}=\{\mathcal F:\mathcal F\subseteq (\mathcal M^{(G)})'\}$, whose maximal element is $(\mathcal M^{(G)})'$. However in general $\mathcal C_{\epsilon,\lambda}$ will not have a unique maximal element for $\lambda<1$: this is due to the fact that two different elements $\mathcal F_1\neq\mathcal F_2\in\mathcal C_{\epsilon,\lambda}$ may have incomparable stabilizers, so that $|\mathsf{Stab}_\epsilon(\mathcal F_1) \setminus\ \mathsf{Stab}_\epsilon(\mathcal F_2)|>0$ and $|\mathsf{Stab}_\epsilon(\mathcal F_2) \setminus\ \mathsf{Stab}_\epsilon(\mathcal F_1)|>0$, and thus $\mathcal F_1\cup\mathcal F_2\notin \mathcal C_{\epsilon,\lambda}$. Our main result towards our approximation error bound is the following.

\begin{proposition}\label{prop:stabepapp}
    Assume that $(\mathcal Y, \mathsf{d}_{\mathcal Y})$ is a compact metric space and consider $\ell(y,y')=\mathsf{d}_{\mathcal Y}(y,y')^2$. Fix $\epsilon\geq 0$. Then for each $\lambda\in (0,1]$ there exists an element $\mathcal F\in \mathcal C_{\epsilon,\lambda}$ and a measurable selection $\iota_\epsilon^0:\mathcal X\to\mathcal X_\epsilon^0$ of representatives of orbits of $\mathcal X$ under the action of $\mathsf{Stab}_\epsilon(\mathcal F)$, such that if $X^0_\epsilon, g$ are random variables obtained from $X\sim\mathcal D$ by imposing $X^0_\epsilon:=\iota^0_\epsilon\circ X$ and $X=g\cdot X^0_\epsilon$, then we have
    \[
        \mathsf{AppErr}(\mathcal F,\mathcal D)\leq \mathbb E_{X^0_\epsilon}\min_y\mathbb E_{g|X^0_\epsilon}\left[\left(
        \max\left\{\mathsf{d}_{\mathcal Y}\left(
        g\cdot y,\ y^*(g\cdot X^0_\epsilon)\right)-\epsilon,\ 0\right\}\right)^2\right].
    \]
\end{proposition}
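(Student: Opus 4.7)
The plan is to exhibit a single-function class $\mathcal F=\{f\}$, with $f(x,y):=\mathsf{d}_{\mathcal Y}(\tilde f(x),y)^2$ for a carefully constructed $\tilde f$, whose $\epsilon$-stabilizer contains a chosen subset $S\subseteq G$ of proportional size at least $\lambda$, and which attains the right-hand side bound. First I would pick such an $S$ --- a subgroup of index $\lfloor 1/\lambda\rfloor$ if one exists, or any measurable subset of the right Haar proportion otherwise --- and, under mild measurability assumptions on the action, fix a measurable orbit-representative map $\iota^0_\epsilon:\mathcal X\to\mathcal X^0_\epsilon$ together with a measurable coordinate $g(\cdot):\mathcal X\to S$ satisfying $x=g(x)\cdot\iota^0_\epsilon(x)$, e.g. via Kuratowski--Ryll-Nardzewski. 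This realizes the decomposition $X=g\cdot X^0_\epsilon$ with $X^0_\epsilon:=\iota^0_\epsilon\circ X$ used in the statement.

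Next, I would define an anchor $\eta:\mathcal X^0_\epsilon\to\mathcal Y$ pointwise, letting $\eta(x^0)\in\mathcal Y$ be any minimizer over $y\in\mathcal Y$ of
\[
\mathbb E_{g|x^0}\bigl[(\max\{\mathsf{d}_{\mathcal Y}(g\cdot y,\ y^*(g\cdot x^0))-\epsilon,\ 0\})^2\bigr].
\]
Existence follows from compactness of $\mathcal Y$ and continuity in $y$; measurability of $\eta$ from Aumann's selection theorem. Set $\tilde f_0(x):=g(x)\cdot\eta(\iota^0_\epsilon(x))$, which is exactly $S$-equivariant by construction, and then define $\tilde f(x)$ to be the metric projection of $y^*(x)$ onto the closed ball $\bar B(\tilde f_0(x),\epsilon)\subseteq\mathcal Y$. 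Because $G$ acts isometrically on $\mathcal Y$, these balls transform $S$-equivariantly with $\tilde f_0$, and $\mathsf{d}_{\mathcal Y}(\tilde f(x),\tilde f_0(x))\leq\epsilon$ uniformly in $x$; hence $\tilde f$ is $\epsilon$-approximately $S$-equivariant in the $\mathcal Y$-distance formulation of $\mathsf{ee}$ mentioned in Section~\ref{sec:partialintro}, so $S\subseteq \mathsf{Stab}_\epsilon(\tilde f)$ and $\mathcal F\in\mathcal C_{\epsilon,\lambda}$.

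It remains to compute the approximation error. By Assumption~2, $Y=y^*(X)$ almost surely, so $\mathsf{AppErr}(\mathcal F,\mathcal D)=\mathbb E[\mathsf{d}_{\mathcal Y}(\tilde f(X),y^*(X))^2]$. In a geodesic target (in particular $\mathcal Y=\mathbb R^d$), metric projection onto a ball yields $\mathsf{d}_{\mathcal Y}(\tilde f(x),y^*(x))=\max\{\mathsf{d}_{\mathcal Y}(\tilde f_0(x),y^*(x))-\epsilon,\ 0\}$. Disintegrating along $X=g\cdot X^0_\epsilon$ and using $\tilde f_0(g\cdot x^0)=g\cdot \eta(x^0)$ yields
\[
\mathsf{AppErr}(\mathcal F,\mathcal D)=\mathbb E_{X^0_\epsilon}\mathbb E_{g|X^0_\epsilon}\bigl[(\max\{\mathsf{d}_{\mathcal Y}(g\cdot \eta(X^0_\epsilon),\ y^*(g\cdot X^0_\epsilon))-\epsilon,\ 0\})^2\bigr],
\]
which by the pointwise-optimality of $\eta(X^0_\epsilon)$ is exactly the right-hand side of the claimed inequality.

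The main obstacle I expect is the measurability bookkeeping: producing a jointly measurable orbit selection $(\iota^0_\epsilon, g(\cdot))$ when $S\subseteq G$ is a generic (not necessarily group) subset of $G$ and when the stabilizer of $\iota^0_\epsilon(x)$ is nontrivial, and measurably selecting a minimizer $\eta(x^0)$ when it is not unique. A secondary concern is reconciling the exact constant hidden in $\mathsf{Stab}_\epsilon$ between the $\|g\cdot f-f\|_\infty$ formulation on the loss (which picks up a factor depending on $\mathrm{diam}(\mathcal Y)$ when passing through $y\mapsto \mathsf{d}_{\mathcal Y}(\cdot,y)^2$) and the $\mathcal Y$-distance formulation on $\tilde f$ used above; this only affects the proposition by rescaling $\epsilon$ by an absolute constant.
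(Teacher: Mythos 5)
Your proposal is correct and follows essentially the same route as the paper's proof: reduce via $Y=y^*(X)$ and the disintegration $X=g\cdot X^0_\epsilon$, note that an ($\epsilon$-approximately) equivariant $\tilde f$ is determined by a choice of value $y$ on each orbit representative, optimize that choice pointwise, and spend the $\epsilon$ slack by moving toward $y^*$, which yields the $(\cdot-\epsilon)_+^2$ term. Your version is simply more explicit — singleton class, measurable selections, metric projection — and the two caveats you flag (measurability bookkeeping, and the constant-factor mismatch between the $\|g\cdot f-f\|_\infty$ and the $\mathsf{d}_{\mathcal Y}$-level notions of $\epsilon$) are glossed over in the paper's own proof as well.
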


In the above, as $\epsilon$ increases, we see a decrease in the approximation gap, since in this case model classes $\mathcal F$ allow more freedom for approximating $y^*$ more accurately. On the other hand, the effect of parameter $\lambda$ is somewhat more subtle, and it has to do with the fact that as $\lambda$ decreases, $\mathsf{Stab}_\epsilon(\mathcal F)$ for $\mathcal F\in \mathcal C_{\epsilon,\lambda}$ can become smaller and the allowed oscillations of $y^*(g\cdot X^0_\epsilon)$ are more and more controlled.

Now to bound the approximation error above using the quantities that we have been working with, we will use the following version of an {\bfseries isodiametric inequality} on metric groups $G$. As before we assume that $G$ has Hausdorff dimension $\mathsf{dim}(G)>0$ which coincides with the doubling dimension, or is discrete. By $|X|$ we denote the Hausdorff measure of $X\subseteq G$ if $\mathsf{dim}(G)>0$ or the cardinality of $X$ if $G$ is finite. Then there exists an isodiametric constant $C_G$ depending only on $G$ such that for all $X\subseteq G$ it holds that
    \begin{equation}\label{eq:isodiamg}
        \frac{|X|}{|G|}\geq \lambda \quad\Rightarrow\quad \mathsf{diam}(X)\geq C_G\lambda^{1/\mathsf{ddim}(G)}.
    \end{equation}
The above bound can be obtained directly from the doubling property of $G$, applied to a cover of $X$ via a single ball of radius $\mathsf{diam}(X)$, which can be refined to obtain better and better approximations of $|X|$. We can now control the bound from Proposition \ref{prop:stabepapp} as follows:
\begin{thm}\label{thm:appgap}
    Assume that $(\mathcal Y, \mathsf{d}_{\mathcal Y})$ is a compact metric space and $\ell(y,y')=\mathsf{d}_{\mathcal Y}(y,y')^2$. Let $G$ be a compact Lie group or a finite group, with distance $\mathsf{d}_G$. Let $\mathsf{ddim}(G)$ be the doubling dimension of $G$, and assume that for all $g\in G$ such that the action of $g$ over $\mathcal X$ is defined, it holds that $\mathsf{d}(g\cdot x,g'\cdot x)\leq L' \mathsf{d}_G(g,g')$. Then, there exists a constant $C_G$ depending only on $G$ such that for all $\lambda\in(0,1]$ and $\epsilon>0$, there exists an element $\mathcal F\in \mathcal C_{\epsilon,\lambda}$ where for any data distribution $\mathcal D$, $y^*$ can be chosen Lipschitz with constant $\mathsf{Lip}(y^*)$. We have
    \[
        \mathsf{AppErr}(\mathcal F,\mathcal D)\leq \left(
        \max\left\{C_G\ L'\ (1+\mathsf{Lip}(y^*))\lambda^{1/\mathsf{ddim}(G)}-\epsilon,\ 0\right\}\right)^2.
    \]
\end{thm}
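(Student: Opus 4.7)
The plan is to start from the master inequality in Proposition \ref{prop:stabepapp},
\[
\mathsf{AppErr}(\mathcal F,\mathcal D)\leq \mathbb E_{X^0_\epsilon}\min_y\mathbb E_{g|X^0_\epsilon}\left[\left(\max\left\{\mathsf{d}_{\mathcal Y}(g\cdot y,\ y^*(g\cdot X^0_\epsilon))-\epsilon,\ 0\right\}\right)^2\right],
\]
and to control the inner distance by a quantity independent of $(X^0_\epsilon,g)$, so that the outer expectation becomes trivial. This requires both (i) exhibiting an element $\mathcal F\in\mathcal C_{\epsilon,\lambda}$ whose $\epsilon$-stabilizer is geometrically concentrated near the identity, and (ii) selecting a good trial point for the inner $\min_y$.

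For (i), I would fix $S\subseteq G$ to be a metric ball around the identity with $|S|/|G|\geq \lambda$ of minimal diameter. By the doubling property of $G$, which in the saturating direction complements the isodiametric inequality \eqref{eq:isodiamg}, such a ball satisfies $\mathsf{diam}(S)\leq C_G\lambda^{1/\mathsf{ddim}(G)}$ for a constant depending only on $G$. Construct $\mathcal F$ so that $\mathsf{Stab}_\epsilon(\mathcal F)=S$, e.g.\ by taking $\mathcal F$ to consist of an $S$-invariant function whose only $\epsilon$-approximate symmetries are the elements of $S$ itself; this places $\mathcal F\in \mathcal C_{\epsilon,\lambda}$ and guarantees the measurable orbit selection $\iota_\epsilon^0$ of Proposition \ref{prop:stabepapp} is well-defined. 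Since $e\in\mathsf{Stab}_\epsilon(\mathcal F)$ automatically, every $g$ appearing in the master inequality satisfies $\mathsf{d}_G(g,e)\leq \mathsf{diam}(S)\leq C_G\lambda^{1/\mathsf{ddim}(G)}$.

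For (ii), for fixed $X^0_\epsilon$ take the trial point $y=y^*(X^0_\epsilon)$. Splitting via the triangle inequality,
\[
\mathsf{d}_{\mathcal Y}(g\cdot y,\ y^*(g\cdot X^0_\epsilon))\leq \mathsf{d}_{\mathcal Y}(g\cdot y,\ y)+\mathsf{d}_{\mathcal Y}(y,\ y^*(X^0_\epsilon))+\mathsf{d}_{\mathcal Y}(y^*(X^0_\epsilon),\ y^*(g\cdot X^0_\epsilon)),
\]
the middle term vanishes by construction, the first is at most $L'\,\mathsf{d}_G(g,e)$ from the Lipschitz bound on the $G$-action on $\mathcal Y$, and the third is at most $\mathsf{Lip}(y^*)\cdot L'\,\mathsf{d}_G(g,e)$ by combining the Lipschitz constant of $y^*$ with the Lipschitz bound on the action on $\mathcal X$. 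Summing,
\[
\mathsf{d}_{\mathcal Y}(g\cdot y,\ y^*(g\cdot X^0_\epsilon))\leq L'(1+\mathsf{Lip}(y^*))\,\mathsf{d}_G(g,e)\leq C_G\,L'(1+\mathsf{Lip}(y^*))\,\lambda^{1/\mathsf{ddim}(G)}.
\]
Since this bound is uniform in $(X^0_\epsilon,g)$, the $(\cdot-\epsilon)_+^2$ operation applied pointwise yields the claim, and the outer expectations disappear.

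The main obstacle is step (i): arranging for $\mathsf{Stab}_\epsilon(\mathcal F)$ to actually coincide with (or be contained in) the small-diameter set $S$, particularly when $S$ is not a subgroup of $G$ so that the orbit structure required by Proposition \ref{prop:stabepapp} is not automatically a partition of $\mathcal X$. A clean workaround is to replace $S$ by the subgroup $\langle S\rangle$ it generates, whose diameter in the doubling group $G$ grows by at most a constant factor that can be absorbed into $C_G$, thereby preserving the density bound while yielding a well-defined measurable orbit selection. A secondary technicality is that \eqref{eq:isodiamg} is phrased as a lower bound on diameter for a given volume fraction, whereas we need the converse upper bound for balls; both finite and compact Lie cases admit this reverse estimate from the doubling property uniformly, with the same constant (up to absolute factors) absorbed into $C_G$.
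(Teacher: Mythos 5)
Your proposal is correct and follows essentially the same route as the paper's proof: choose the $\epsilon$-stabilizer to be an extremal (minimal-diameter) set of measure fraction $\lambda$, bound its diameter by $C_G\lambda^{1/\mathsf{ddim}(G)}$ via the doubling/isodiametric estimate, and then combine the Lipschitz bounds on the action and on $y^*$ with the triangle inequality to control the distance term in Proposition \ref{prop:stabepapp} uniformly, before applying the $(\cdot-\epsilon)_+^2$ truncation. If anything, you are more explicit than the paper about two technicalities it glosses over — that the isodiametric inequality as stated gives the wrong direction and must be complemented by the reverse bound for balls, and that realizing a prescribed set as $\mathsf{Stab}_\epsilon(\mathcal F)$ requires care when that set is not a subgroup.
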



We note that, given our proof strategy of Proposition \ref{prop:stabepapp}, we conjecture that actually the above bound is sharp up to a constant factor: $\min_{\mathcal F\in\mathcal C_{\epsilon, \lambda}}\mathsf{AppErr}(\mathcal F,\mathcal D)$ may have a lower bound comparable to the above upper bound. The goal of the above result is to formalize the property that the approximation error guarantees of $\mathcal F$ will tend to grow as the amount of imposed symmetries grow, as measured by $\lambda$.

\section{Discussion: Imposing optimal equivariance}
We have so far studied bounds on the generalization and the approximation errors, each of which take a very natural form in terms of properties of the space of orbit representatives (and thus the underlying $G$). One of them depends on the data distribution, while the other doesn't. Using them we can thus state a result that quantifies the performance error, or gives the generalization-approximation trade-off, of a model based on the data distribution (and their respective symmetries). Define the performance error of a model class $\mathcal F\in \mathcal C_{\epsilon,\lambda}$ over an i.i.d. sample $\{Z_i\}_{i=1}^n, Z_i\sim\mathcal D$ as 
\[
    \mathsf{PerfErr}(\mathcal F, \{Z_i\},\mathcal D):=\mathsf{GenErr}(\mathcal F, \{Z_i\}, \mathcal D) + \mathsf{AppErr}(\mathcal F,\mathcal D).
\]
Combining Theorems \ref{thm:gengap} and \ref{thm:appgap}, we get the following:
\begin{thm}\label{thm:finalthm}
    Under the assumptions of Theorems \ref{thm:gengap} and \ref{thm:appgap}, we have that, assuming all the function classes are restricted to functions with values in $[-M,M]$, then there exist values $C_1, C_2, C_3>$ depending only on $d_0,(2LD)^d/|G|, \delta_G$ and $C=C_GL'(1+\mathsf{Lip}(y^*))$ such that with probability at least $1-\delta$,
    \[
        \mathsf{PerfErr}(\mathcal F,\{Z_i\},\mathcal D)\lesssim n^{-1/2}\sqrt{M\log(2/\delta)} + 2\epsilon + (C\lambda^{1/\mathsf{d}_G} - \epsilon)_+^2 + \left\{\begin{array}{ll}
            C_1\frac{1}{(n\lambda)^{1/2}} &\text{ if }n\lambda\geq C_3,\\[3mm]
            C_2\frac{1}{(n\ \lambda)^{1/d_0}} &\text{ if }n\ \lambda <C_3.
        \end{array}\right.
    \]
\end{thm}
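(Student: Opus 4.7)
The plan is essentially to juxtapose Theorems \ref{thm:gengap} and \ref{thm:appgap}, applied to a single model class, and then bookkeep the resulting constants. First I would invoke Theorem \ref{thm:appgap}: for the given $\epsilon$ and $\lambda$, it produces some $\mathcal F \in \mathcal C_{\epsilon,\lambda}$ satisfying
\[
    \mathsf{AppErr}(\mathcal F,\mathcal D)\;\leq\;\bigl(\max\{C\lambda^{1/\mathsf{d}_G}-\epsilon,\,0\}\bigr)^2,\qquad C=C_G L'(1+\mathsf{Lip}(y^*)).
\]
By the definition of $\mathcal C_{\epsilon,\lambda}$, this $\mathcal F$ automatically satisfies $|\mathsf{Stab}_\epsilon(\mathcal F)|\geq \lambda |G|>0$, so the size hypothesis on $\mathsf{Stab}_\epsilon$ required by Theorem \ref{thm:gengap} is met. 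The other hypotheses of Theorem \ref{thm:gengap} (the covering assumption \eqref{eq:ass2} and the Corollary \ref{cor:prodcover} assumption with some $L_\epsilon$) are part of the standing hypotheses inherited from the theorem statement, and I absorb $L_\epsilon$ into a single upper bound $L$.

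Next I apply Theorem \ref{thm:gengap} to this same $\mathcal F$, using $\|\mathcal F\|_\infty\leq M$ from the boundedness hypothesis. This yields with probability at least $1-\delta$
\[
    \mathsf{GenErr}(\mathcal F,\{Z_i\},\mathcal D)\;\lesssim\; n^{-1/2}\sqrt{M\log(2/\delta)}+2\epsilon+(E_\epsilon),
\]
where $(E_\epsilon)$ is expressed via $|\mathsf{Stab}_\epsilon|$. The key substitution is $|\mathsf{Stab}_\epsilon|\geq \lambda |G|$, which I plug into the two expressions in Theorem \ref{thm:gengap}. The threshold condition $(2L_\epsilon)^d D^d < |\mathsf{Stab}_\epsilon| n\,\delta_G^{d_0}$ then translates to $n\lambda \geq (2LD)^d/(|G|\,\delta_G^{d_0})$, which identifies $C_3:=(2LD)^d/(|G|\,\delta_G^{d_0})$. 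In the first regime the factor $|\mathsf{Stab}_\epsilon|^{-1/2}$ becomes $(\lambda|G|)^{-1/2}$, producing
\[
    C_1:=\tfrac{4^{d_0}d_0}{d_0-2}\,\delta_G^{-d_0/2+1}\bigl((2LD)^d/|G|\bigr)^{1/2}
\]
as the prefactor of $(n\lambda)^{-1/2}$. Analogously, the second regime yields $C_2:=\tfrac{4^{d_0}d_0}{d_0-2}\bigl((2LD)^d/|G|\bigr)^{1/d_0}$ multiplying $(n\lambda)^{-1/d_0}$. All three constants depend only on $d_0$, $(2LD)^d/|G|$, and $\delta_G$, as claimed.

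To finish, since $\mathsf{PerfErr}=\mathsf{GenErr}+\mathsf{AppErr}$ by definition, I simply add the two bounds. The $n^{-1/2}\sqrt{M\log(2/\delta)}$, the $2\epsilon$, and the two-case term come from the generalization bound, while the $(C\lambda^{1/\mathsf{d}_G}-\epsilon)_+^2$ term comes from the approximation bound. The main subtle point — and, I think, the only place that requires care — is that Theorem \ref{thm:appgap} is an existential statement (it selects a specific witness $\mathcal F$), so one must verify that this same witness satisfies all hypotheses of Theorem \ref{thm:gengap}. This is immediate from $\lambda>0$ and from the fact that the hypotheses of Theorem \ref{thm:gengap} are quantified over all such $\mathcal F$ with the stated properties; no additional compatibility condition needs to be checked. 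The remainder is bookkeeping of the constants.
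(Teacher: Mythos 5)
Your proposal is correct and matches the paper's own (essentially implicit) argument: the paper simply states that Theorem \ref{thm:finalthm} follows by ``combining Theorems \ref{thm:gengap} and \ref{thm:appgap}'', and your substitution $|\mathsf{Stab}_\epsilon|\geq\lambda|G|$ into $(E_\epsilon)$ yields exactly the constants $C_1,C_2,C_3$ that the paper records in its appendix discussion of the optimal $\lambda^*$. Your observation that the witness $\mathcal F$ from Theorem \ref{thm:appgap} must be checked against the hypotheses of Theorem \ref{thm:gengap} is the right point of care, and your resolution of it is the intended one.
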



Note that the above bounds are not informative as $\lambda\to 0$ for the continuous group case, which corresponds to the second option in Theorem \ref{thm:finalthm}. This can be sourced back to the form of Assumption \eqref{eq:ass2}, in which the covering number on the left is always $\geq 1$, whereas in the continuous group case, the right-hand side may be arbitrarily small. 
\begin{wrapfigure}[10]{r}{0in}
\centering
\includegraphics[width=0.4\textwidth,trim=0pt 0pt 0pt 0pt]{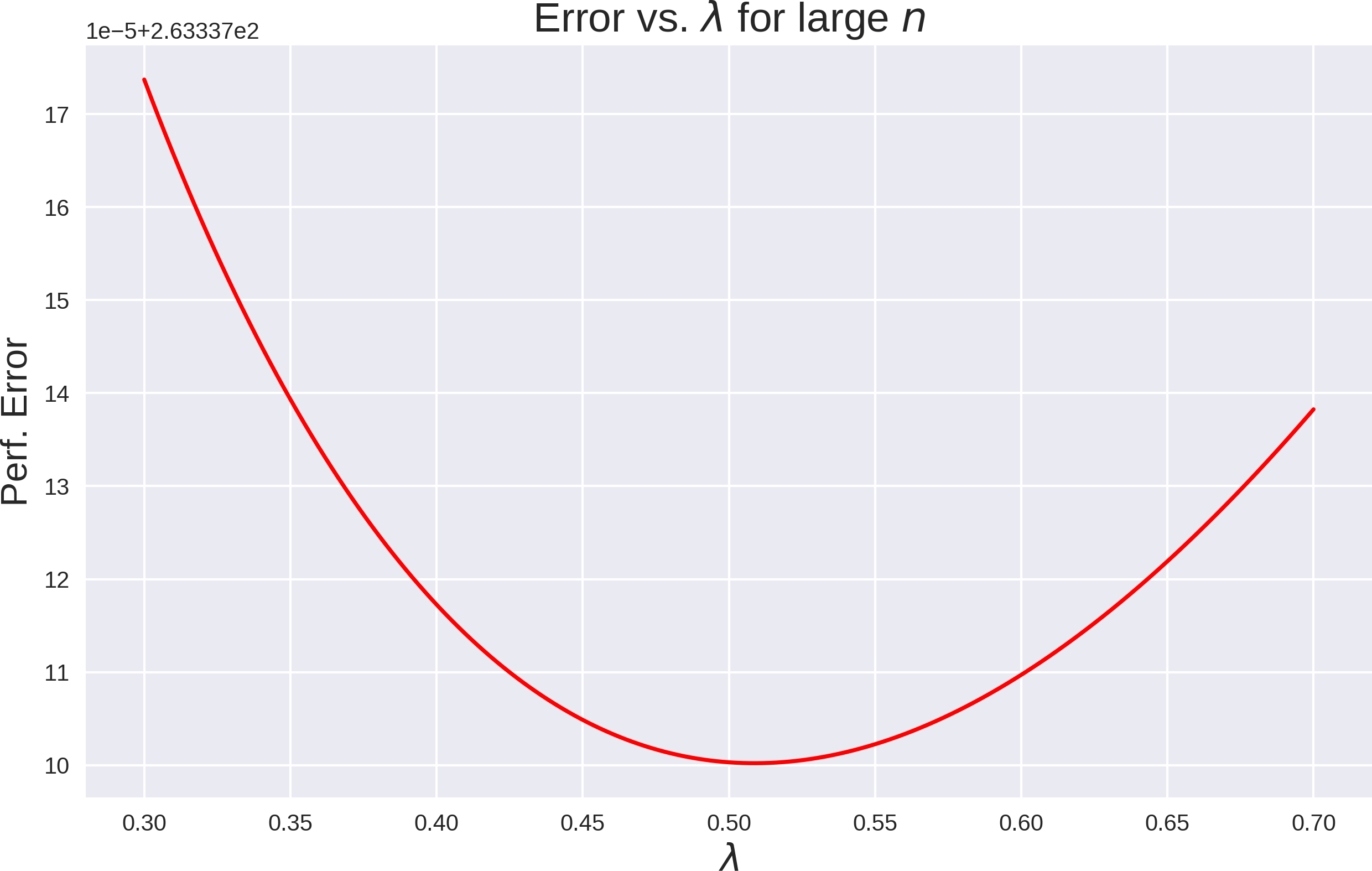}
\caption*{\label{fig:tradeoff}\small{Error versus lambda for large $n$ for fixed values of $C,C_1,C_2,C_3$.}}
\vspace{-5mm}
\end{wrapfigure}
Thus the result is only relevant for $\lambda$ away from $0$. If we fix $\epsilon=0$ and we optimize over $\lambda\in(0,1]$ in the above bound, we find a unique minimum $\lambda^*$ for the error bound, specifying the optimum size of $\mathsf{Stab}_\epsilon$ for the corresponding case (see the Appendix for more details). This validates the intuition stated at the onset that for the best model its symmetries must align with that of the data. As an illustration, for $C, C_1, C_2 = 0.04$ and $C_3=0.01$ with $n=1000000$, the error trade-off is shown on the figure on the right.

\section{Concluding Remarks}

In this paper, we have provided general quantitative bounds showing that machine learning models with task-pertinent symmetries improve generalization. Our results do not require the symmetries to be a group and can work with partial/approximate equivariance. We also presented a general result which confirms the prevalent intuition that if the symmetry of a model is mis-specified w.r.t to the data symmetry, its performance will suffer. We now indicate some important future research directions, that correspond to limitations of our work:
\begin{itemize}
    \item (\emph{Model-specific bounds}) While in Theorem \ref{thm:finalthm} we obtain the existence of an optimal amout $\lambda^*$ without hopes to be sharp in such generality, we may expect that if we restrict to specific tasks, the bounds can become much tighter, and if so, the value $\lambda^*$ of optimal amount of symmetries can become interpretable and give insights into the structure of the learning tasks.
    \item (\emph{Tighter bounds beyond classical PAC theory}) For the sake of making the basic principles at work more transparent, we based our main result on long-known classical results. However, tighter data-dependent or probabilistic bounds would be very useful for getting a sharper value for the optimal value of symmetries $\lambda^*$ via a stronger version of Theorem \ref{thm:finalthm}.
    \item (\emph{Beyond controlled doubling dimension}) Our focus here is on groups $G$ of controlled doubling dimension. This includes compact and nilpotent Lie groups, and discrete groups of polynomial growth, but the doubling dimension is not controlled for notable important cases, such as for the permutation group $S_n$ (cf. section 3 of \cite{permutations1998tayuan}) or for the group $(\mathbb Z/2\mathbb Z)^n$. In order to cover these cases, it will be interesting to build analogues of our main result based on other group complexity bounds, beyond the doubling dimension case.
\end{itemize}


\begin{ack}
The authors thank Aryeh Kontorovich for readily answering questions (and supplying references) about Kolmogorov-Tikhomirov, and Bryn Elesedy for graciously answering queries and providing a copy of his dissertation before it appeared online. MP thanks the Centro Nacional de Inteligencia Artificial in Chile for support, as well as Fondecyt Regular grant number 1210462 entitled ``Rigidity, stability and uniformity for large point configurations''.
\end{ack}

\bibliographystyle{plain}
\bibliography{main} 

\newpage

\appendix

\section{Proofs}
\subsection{Proofs of results from Section \ref{sec:pacbounds}}
\subsubsection{Proof of Proposition \ref{prop:basebound}}

\begin{proof}
    Reformulating \eqref{eq:concentrbound} we find that with probability at least $1-\delta$ it holds that:
    \begin{equation}\label{eq:radebound}
        \sup_{f\in\mathcal F}|(P-P_n)f|\le \mathcal R(\mathcal F_Z) + \frac{\sqrt{2\|\mathcal F\|_\infty\log(2/\delta)}}{\sqrt n}.
    \end{equation}
    Next, we compound the bounds for $\mathcal R(\mathcal F_Z)$. The optimum $\alpha>0$ in the first line of \eqref{eq:dudley} is the one for which $\frac{n}{9} = \ln\mathcal N(\mathcal F, \alpha, \|\cdot\|_\infty)$. Keeping in mind \eqref{eq:ddimz}, we choose $\alpha = \mathsf{diam}(\mathcal Z) n^{-\frac{1}{\mathsf{ddim}\mathcal Z}}$ in \eqref{eq:dudley}, which gives (abbreviating $d=\mathsf{ddim}\mathcal Z, D=\mathsf{diam}\mathcal Z$)
    \begin{eqnarray*}
        \mathcal R(\mathcal F_Z)&\lesssim& D/n^{1/d} + n^{-1/2}\int_{D n^{-1/d}}^{\infty} (D/t)^{d/2}dt = D/n^{1/d} + (D/n)^{1/2}\int_{n^{-1/d}}^{\infty} \tau^{-d/2}d\tau\\
        &=& \frac{D}{n^{1/d}}\left(1+\frac{2}{d-2}\right).
    \end{eqnarray*}
    Now the last equation and equation \eqref{eq:radebound} yield the claim.
\end{proof}

\subsubsection{Proof of Proposition \ref{prop:generr2}}
We start with a preliminary result under hypotheses of strict equivariance. In this case, we are able to use a change of variables to reduce the generalization error formula to an equivalent one depending only on a measurable choice of $G$-orbit representatives of elements from $\mathcal Z$:

\begin{proposition}\label{prop:generr}
    Let $\mathcal F$ be a set of $G$-invariant functions, and let $\mathcal Z^0\subset\mathcal Z$ be a choice of $G$-orbit representatives for points in $\mathcal Z$, such that $\iota^0:\mathcal Z\to\mathcal Z^0$ associating to each $z\in\mathcal Z$ its orbit representative $z^0$, is Borel measurable. Let $\mathcal F^0:=\{f|_{\mathcal Z^0}:\ f\in\mathcal F\}$ and denote by $\iota^0(\mathcal D)$ the image measure of $\mathcal D$. Then for each $n\in\mathbb N$ if $\{Z_i\}_{i=1}^n$ are i.i.d. samples with $Z_i\sim \mathcal D$ and $Z_i^0:=\iota^0\circ Z_i$, we have
    \[
        \mathsf{GenErr}(\mathcal F,\{ Z_i\},\mathcal D)=\mathsf{GenErr}^{(G)}(\mathcal F, \{ Z_i\}, \mathcal D)=\mathsf{GenErr}(\mathcal F^0,\{ Z_i^0\},\iota^0(\mathcal D)).
    \]
\end{proposition}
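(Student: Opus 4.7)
The proposition has two equalities to establish, and both are essentially tautological given the definitions, once one uses $G$-invariance of $f$ and the pushforward construction. The plan is to handle the two equalities separately: the first by collapsing the $g$-average using pointwise invariance, and the second by a change-of-variables through the orbit-representative map $\iota^0$.

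\textbf{First equality} $\mathsf{GenErr}^{(G)}(\mathcal F,\{Z_i\},\mathcal D)=\mathsf{GenErr}(\mathcal F,\{Z_i\},\mathcal D)$. Here I would use directly the fact that $f \in \mathcal F$ is invariant under the product $G$-action on $\mathcal Z = \mathcal X \times \mathcal Y$, which, unwinding the definition from Section \ref{sect:prelimequivariant}, means $f(g\cdot z) = f(z)$ for every $g \in G$ and every $z \in \mathcal Z$. Thus, for every fixed $Z$, $\mathbb E_g[f(g\cdot Z)] = f(Z)$, and by Fubini $\mathbb E_g \mathbb E_Z[f(g\cdot Z)] = \mathbb E_Z[f(Z)] = Pf$ and similarly $\frac1n\sum_i \mathbb E_g[f(g\cdot Z_i)] = \frac1n\sum_i f(Z_i) = P_n f$. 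Taking suprema over $\mathcal F$ on both sides yields the first equality. This step is essentially one line once invariance is invoked.

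\textbf{Second equality} $\mathsf{GenErr}(\mathcal F,\{Z_i\},\mathcal D) = \mathsf{GenErr}(\mathcal F^0,\{Z_i^0\}, \iota^0(\mathcal D))$. For every $z \in \mathcal Z$, by definition $\iota^0(z) \in \mathcal Z^0$ lies in the same $G$-orbit as $z$; since $f$ is $G$-invariant this gives $f(z) = f(\iota^0(z)) = f|_{\mathcal Z^0}(\iota^0(z))$. By the defining property of the pushforward measure $\iota^0(\mathcal D)$ (using the Borel measurability of $\iota^0$),
\[
Pf \;=\; \mathbb E_{Z\sim\mathcal D}[f(Z)] \;=\; \mathbb E_{Z\sim\mathcal D}\bigl[f|_{\mathcal Z^0}(\iota^0(Z))\bigr] \;=\; \mathbb E_{Z^0\sim \iota^0(\mathcal D)}\bigl[f|_{\mathcal Z^0}(Z^0)\bigr],
\]
and for the empirical term $\frac1n\sum_i f(Z_i) = \frac1n\sum_i f|_{\mathcal Z^0}(Z_i^0)$ by the same pointwise identity applied at each sample, using $Z_i^0 = \iota^0 \circ Z_i$. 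Because $\mathcal Z^0$ meets every $G$-orbit exactly once and each $f \in \mathcal F$ is $G$-invariant, the restriction map $f \mapsto f|_{\mathcal Z^0}$ is a bijection $\mathcal F \to \mathcal F^0$; hence the supremum over $\mathcal F$ coincides with the supremum over $\mathcal F^0$. Putting these together gives the second equality.

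\textbf{Obstacle.} There is no real obstacle: both equalities reduce to observing that, under $G$-invariance, the $g$-average is redundant and the entire computation depends only on the orbit of $Z$, which is encoded by $\iota^0(Z)$. The only technical ingredient is the measurability of $\iota^0$, which is assumed in the hypothesis and is exactly what is needed to justify the pushforward identity. Combining the two equalities finishes the proof.
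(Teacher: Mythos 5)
Your proof is correct and follows essentially the same route as the paper's: the first equality by collapsing the $g$-average using pointwise $G$-invariance ($f(Z)=\mathbb E_g[f(g\cdot Z)]$), and the second by the identity $f(z)=f(\iota^0(z))$ together with a change of variables through the pushforward $\iota^0(\mathcal D)$. The only superfluous remark is the claim that $f\mapsto f|_{\mathcal Z^0}$ is a bijection — surjectivity onto $\mathcal F^0$ (which holds by definition of $\mathcal F^0$) already suffices for the suprema to agree.
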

\begin{proof}
    For the first equality, we use the definition of $\mathsf{GenErr}$ and the change of variable formula \eqref{eq:equiequi} and the fact that $G$-invariant functions $f$ satisfy $f(Z)=\mathbb E_g[f(g\cdot Z)]$. For the second equality, note that by hypothesis, for each $f\in\mathcal F$ we have $f(z)=f(g\cdot z)$ for all $g\in G, z\in\mathcal Z$, in particular $f(z)=f(\iota^0(z))$ and we conclude by a change of variable by the map $\iota^0$ in the expectations from the definition of $\mathsf{GenErr}^{(G)}$.
\end{proof}

Now the proof Proposition \ref{prop:generr2} combines the above idea with a simple extra step:
\begin{proof}[Proof of Proposition \ref{prop:generr2}:]
    The proof uses the triangular inequality. For $f\in\mathcal F$ and $g\in \mathsf{Stab}_\epsilon$, we have: 
    \[
        |Pf - P_n f|=\left|\mathbb E[f(Z)] - \frac1n\sum_{i=1}^nf(Z_i)\right|\leq \left|\mathbb E[g\cdot f(Z)] - \frac1n\sum_{i=1}^ng\cdot f(Z_i)\right| + 2\|g\cdot f - f\|_\infty.
    \]
    By averaging over $g\in\mathsf{Stab}_\epsilon$, we obtain the inequality in the statement of the proposition. The equality follows by a change of variable via map $\iota^0_\epsilon$, exactly as in the strategy of Proposition \ref{prop:generr}. 
\end{proof}
%

\subsubsection{Proof of Corollary \ref{cor:prodcover} and of the more general result of Theorem \ref{thm:prodcover}}

In order to make the treatment better digestible, we first consider the intuitively simpler case of strict equivariance, and then describe how to extend it to the more general case of approximate and partial equivariance.
In this case, if we restrict our equivariant functions to only the space of orbit representatives $\mathcal Z^0$, the dimension counts from classical generalization bounds of Proposition \ref{prop:basebound} improve as follows: 

\begin{corollary}\label{cor:genequi}
    Assume that $\mathcal F$ is composed of $G$-invariant functions and that $d_0:=\mathsf{ddim}(\mathcal Z^0)>2$. Also, denote $D_0:=\mathsf{diam}(\mathcal Z^0)$. With the same notation as in Proposition \ref{prop:generr} and with the hypotheses of Proposition \ref{prop:basebound}, for any probability distribution $\mathcal D$ over $\mathcal Z$, the following holds with probability at least $1-\delta$:
    \[
        \mathsf{GenErr}(\mathcal F, \{Z_i\}, \mathcal D)\lesssim \frac{d_0}{d_0-2}\left(\frac{D_0^{d_0}}{n}\right)^{1/d_0} + n^{-1/2} \sqrt{\|\mathcal F\|_\infty\log(2/\delta)}.
    \]
\end{corollary}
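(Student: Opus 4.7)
The plan for Corollary \ref{cor:genequi} is immediate from composition of the two preceding results. Proposition \ref{prop:generr} has already done the essential work, yielding the identity
\[
    \mathsf{GenErr}(\mathcal F, \{Z_i\}, \mathcal D) = \mathsf{GenErr}(\mathcal F^0, \{Z_i^0\}, \iota^0(\mathcal D)),
\]
so it suffices to control the right-hand side. I would then invoke Proposition \ref{prop:basebound} applied to the class $\mathcal F^0$ on the metric space $(\mathcal Z^0, \mathsf{d}|_{\mathcal Z^0})$, with i.i.d.\ samples $Z_i^0 \sim \iota^0(\mathcal D)$, doubling dimension $d_0 = \mathsf{ddim}(\mathcal Z^0)$, and diameter $D_0 = \mathsf{diam}(\mathcal Z^0)$. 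Plugging these parameters into the conclusion of Proposition \ref{prop:basebound} produces exactly the stated inequality, with the factor $4^{d_0}$ absorbed into the $\lesssim$-implicit constant.

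The only bookkeeping to check is that the hypotheses of Proposition \ref{prop:basebound} descend from $\mathcal F$ on $\mathcal Z$ to $\mathcal F^0$ on $\mathcal Z^0$. Namely: the sup-norm bound $\|\mathcal F^0\|_\infty \leq \|\mathcal F\|_\infty$ is automatic from restriction, so the same range $[-M/2, M/2]$ hypothesis carries over (which is why the same $\|\mathcal F\|_\infty$ appears in the second term); the $1$-Lipschitz hypothesis on $\mathcal F$ with respect to $\mathsf{d}$ restricts to $1$-Lipschitz for $\mathcal F^0$ with respect to $\mathsf{d}|_{\mathcal Z^0}$; and the pushforward $\iota^0(\mathcal D)$ is a bona fide probability measure on $\mathcal Z^0$ because $\iota^0$ is Borel measurable, which was assumed in Proposition \ref{prop:generr}. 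One also uses implicitly that $(\mathcal Z^0, \mathsf{d}|_{\mathcal Z^0})$ inherits the centralizability and doubling regularity of $\mathcal Z$ so that the Kolmogorov--Tikhomirov bound \eqref{eq:tikokolmo} and the covering estimate \eqref{eq:ddimz} apply with $d$ replaced by $d_0$ and $D$ replaced by $D_0$.

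There is no real obstacle: all the content sits in Proposition \ref{prop:generr}, whose change-of-variables argument replaces $\mathcal Z$ by $\mathcal Z^0$. The quantitative improvement $n^{-1/d} \to n^{-1/d_0}$ is then simply a consequence of $d_0 \leq d$, reflecting the intuition voiced after Proposition \ref{prop:generr2} that the group degrees of freedom are factored out of the effective sample-complexity dimension; in the especially clean product case $\mathcal Z \cong \mathcal Z^0 \times G$ this reduces to $d_0 = d - \mathsf{dim}(G)$.
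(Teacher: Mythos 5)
Your proposal is correct and follows exactly the paper's own route: reduce via Proposition \ref{prop:generr} to $\mathsf{GenErr}(\mathcal F^0,\{Z_i^0\},\iota^0(\mathcal D))$ and then apply Proposition \ref{prop:basebound} on $\mathcal Z^0$ with parameters $d_0, D_0$, using $\|\mathcal F^0\|_\infty\le\|\mathcal F\|_\infty$. The extra bookkeeping you supply (restriction preserves the Lipschitz bound, $\iota^0(\mathcal D)$ is a probability measure, $\mathcal Z^0$ inherits the covering regularity) is exactly what the paper leaves implicit.
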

\begin{proof}
    Due to Proposition \ref{prop:generr}, we only need to bound $\mathsf{GenErr}(\mathcal F^0, \{Z_i^0\},\iota^0(\mathcal D))$. Thus it suffices to apply Proposition \ref{prop:basebound} for the above function. We note that $\|\mathcal F^0\|_\infty\le\|\mathcal F\|_\infty$ to conclude.
\end{proof}

The drawback of the above Corollary, is that it leaves open the question of \emph{how to actually bound} the diameter and dimension of $\mathcal Z^0$, on which we do not have direct control. The next steps we take consist precisely in translating the information from properties of $G$ to relevant properties of $\mathcal Z^0$.

A first, simpler, approach could be the following. Under the reasonable assumption that $\mathcal Z, \mathcal Z^0$ have diameter greater than $1$, the leading term on the left in Corollary \ref{cor:genequi} is $n^{-1/d_0}$.Thus the optimal choices of $\mathcal Z^0$ are those which minimize the doubling dimension $d_0=\mathsf{ddim}(\mathcal Z^0)$ amongst sets of representatives of $G$-orbits. This is a weak regularity assumption, implying that we want $\mathcal Z^0$ to not oscillate wildly. The effect of $G$ on coverings is evident in case $G$, $\mathcal Z^0$ are manifolds, and $\mathcal Z=\mathcal Z^0\times G$ (see \eqref{eq:nz0bound} for the strictly equivariant case, and the more general \eqref{eq:nz0epbound} for the general case). Since $\mathsf{ddim}$ coincides with topological dimension, we immediately have 
\[
    d_0=d - \mathsf{dim}(G).
\]
Intuitively, the dimensionality of $G$ can be understood as eliminating degrees of freedom from $\mathcal Z$, and it is this effect that improves generalization by $n^{-1/(d-\mathsf{dim}(G))}-n^{-1/d}$. 

In order to include more general situations, we now describe a second, more in-depth approach. We take a step back and rather than addressing direct diameter and dimension bounds for $\mathcal Z^0$, we go "back to the source" of Proposition \ref{prop:basebound}. We update the bounds on covering numbers of $\mathcal Z^0$, directly in terms of the $G$-action and of $\mathcal Z$. The ensuing framework is robust enough to later include, after a few adjustments, also the cases of partial and approximate equivariance. Here is our fundamental bound, which generalizes and extends \cite[Thm.3]{sokolic2017generalization}.

\begin{thm}\label{thm:prodcover}
    Assume that $\mathcal Z$ is a metric space with distance $d$ and $S\subset G$ is a subset of a metric group $G$ consisting of transformations $g:\mathcal Z\to\mathcal Z$ (with action denoted $g\cdot z:=g(z)$) for which there exists a choice of $S$-orbit representatives $\mathcal Z_0\subset\mathcal Z$ and a distance function $\mathsf{d}$ on $S$ satisfying the following for $L,L'\in (0,+\infty]$ (with the conventions $1/+\infty:=0$ and $\mathcal N(X, 0):=+\infty, \mathcal N(X,+\infty):=0$):
    \vspace{-2mm}
    \begin{enumerate}
    \itemsep0em
        \item For all $z_0,z_0'\in\mathcal Z_0$ and all $g\in S$ it holds that $\frac1{L}\mathsf{d}(z_0,z_0')\le \mathsf{d}(g\cdot z_0, g\cdot z_0')\le L' \mathsf{d}(z_0,z_0')$.
        \item For all $g,g'\in S$ and $z_0, z_0'\in\mathcal Z_0$ it holds that
        $\frac{1}{L}\mathsf{d}_G(g,g')\le \mathsf{d}(g\cdot z_0, g'\cdot z_0')\le L'\mathsf{d}_G(g,g')$.
    \end{enumerate}
    Then for each $\delta'>0$ the following holds
    \begin{equation}\label{eq:coverproduct}
        \mathcal N(\mathcal Z_0,2\delta' L)\mathcal N(S, 2\delta' L)\le \mathcal N(\mathcal Z, \delta')\le \mathcal N(\mathcal Z_0,\delta'/2L')\mathcal N(S, \delta'/2L').
    \end{equation}
\end{thm}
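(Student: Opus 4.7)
The plan is to establish the two inequalities of \eqref{eq:coverproduct} separately: the right-hand bound by constructing an explicit $\delta'$-cover of $\mathcal{Z}$ from covers of $\mathcal{Z}_0$ and $S$, and the left-hand bound by constructing an explicit $2\delta'$-packing of $\mathcal{Z}$ from packings of $\mathcal{Z}_0$ and $S$. Both constructions rest on the observation that, since $\mathcal{Z}_0$ is a set of $S$-orbit representatives, every $z\in\mathcal{Z}$ admits a (not necessarily unique) factorisation $z = g\cdot z_0$ with $g\in S$, $z_0\in\mathcal{Z}_0$.

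For the upper bound, fix $\delta'/(2L')$-covers $\{z_0^{(i)}\}_i$ of $\mathcal{Z}_0$ and $\{g^{(j)}\}_j$ of $S$ of minimal sizes, and claim that the product set $\{g^{(j)}\cdot z_0^{(i)}\}_{i,j}$ is a $\delta'$-cover of $\mathcal{Z}$. Given $z = g\cdot z_0$, I would select indices with $\mathsf{d}(z_0, z_0^{(i)})\le\delta'/(2L')$ and $\mathsf{d}_G(g, g^{(j)})\le\delta'/(2L')$, and route through the intermediate point $g\cdot z_0^{(i)}$ using the triangle inequality. The upper Lipschitz bound of hypothesis 1 controls $\mathsf{d}(g\cdot z_0,\, g\cdot z_0^{(i)})\le L'\,\mathsf{d}(z_0,z_0^{(i)})\le \delta'/2$, while the upper Lipschitz bound of hypothesis 2 controls $\mathsf{d}(g\cdot z_0^{(i)},\, g^{(j)}\cdot z_0^{(i)})\le L'\,\mathsf{d}_G(g,g^{(j)})\le\delta'/2$, so summing yields the required $\delta'$ bound.

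For the lower bound I switch to packing numbers and take maximal $2\delta'L$-separated subsets $\{z_0^{(i)}\}_{i=1}^P\subset\mathcal{Z}_0$ and $\{g^{(j)}\}_{j=1}^Q\subset S$; since any maximal $r$-packing is in particular an $r$-cover, $P\ge\mathcal{N}(\mathcal{Z}_0,2\delta'L)$ and $Q\ge\mathcal{N}(S,2\delta'L)$. The plan is then to verify that the $PQ$ product points $\{g^{(j)}\cdot z_0^{(i)}\}\subset\mathcal{Z}$ are pairwise separated by more than $2\delta'$, whence $\mathcal{N}(\mathcal{Z},\delta')\ge PQ$ follows from the standard inequality between covering numbers at scale $\delta'$ and packing numbers at scale $2\delta'$. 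Pairwise separation splits into three cases: when $j=j'$ and $i\ne i'$, the lower bilipschitz bound of hypothesis 1 yields $\mathsf{d}(g^{(j)}\cdot z_0^{(i)},g^{(j)}\cdot z_0^{(i')})\ge L^{-1}\mathsf{d}(z_0^{(i)},z_0^{(i')})>2\delta'$; the two remaining cases (with $j\ne j'$ and either $i=i'$ or $i\ne i'$) are both handled by the lower bilipschitz bound of hypothesis 2, which yields $\mathsf{d}(g^{(j)}\cdot z_0^{(i)},g^{(j')}\cdot z_0^{(i')})\ge L^{-1}\mathsf{d}_G(g^{(j)},g^{(j')})>2\delta'$ independently of the $\mathcal{Z}_0$-coordinates.

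The main subtlety, and the reason hypothesis 2 is stated for \emph{arbitrary} $z_0,z_0'\in\mathcal{Z}_0$ rather than only the diagonal $z_0=z_0'$, is precisely this last mixed case $i\ne i'$, $j\ne j'$: a bilipschitz bound restricted to the diagonal would leave these product points unseparated and the whole scheme would collapse. The boundary conventions $1/(+\infty)=0$, $\mathcal{N}(X,0)=+\infty$, $\mathcal{N}(X,+\infty)=0$ render both inequalities vacuous when $L$ or $L'$ is infinite. Corollary \ref{cor:prodcover} is then recovered by specialising $S=\mathsf{Stab}_\epsilon$, $\mathcal{Z}_0=\mathcal{Z}^0_\epsilon$ and $\delta'=\delta/(2L)$ in the left-hand inequality of \eqref{eq:coverproduct} and solving for $\mathcal{N}(\mathcal{Z}^0_\epsilon,\delta)$.
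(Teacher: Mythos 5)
Your proof is correct. The upper bound is essentially identical to the paper's: both route $z=g\cdot z_0$ through the intermediate point $g\cdot z_0^{(i)}$, using hypothesis 1 for the first leg and hypothesis 2 for the second, with the same scale $\delta'/2L'$. For the lower bound, however, you take a genuinely different route. The paper starts from an arbitrary minimal $\delta'$-cover of $\mathcal Z$, partitions its points according to which translate $g\cdot\mathcal Z_0$ (for $g$ in a $2\delta'L$-net of $S$) they lie within $\delta'$ of, argues via hypothesis 2 that these groups are disjoint, and shows each group pulls back under $g^{-1}$ to a $2\delta'L$-cover of $\mathcal Z_0$; summing cardinalities gives the product lower bound. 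You instead build a $2\delta'$-separated product set $\{g^{(j)}\cdot z_0^{(i)}\}$ from maximal separated subsets of $\mathcal Z_0$ and $S$, and invoke the standard packing-versus-covering inequality. The two arguments are dual in spirit and consume the hypotheses identically (in particular, both hinge on hypothesis 2 holding for \emph{arbitrary} pairs $z_0\neq z_0'$, which you correctly flag as the crux of the mixed case), but yours is somewhat more robust: the paper's disjointness claim (``at most one $g\in G_\eta$ with $\mathsf{dist}(z,g\cdot\mathcal Z_0)<\delta'$'') implicitly requires the net of $S$ to be $2\delta'L$-\emph{separated}, not merely a cover, whereas your maximal-packing construction supplies that separation by fiat. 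Your recovery of Corollary \ref{cor:prodcover} by specialising the left-hand inequality also matches the paper.
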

Before the proof, we observe how Corollary \ref{cor:prodcover} can be recovered using the choice $L>0, L'=+\infty$ in Theorem \ref{thm:prodcover}:
\begin{proof}[Proof of Corollary \ref{cor:prodcover}:]
    We apply Theorem \ref{thm:prodcover} to $S=\mathsf{Stab}_\epsilon$ and $\mathcal Z_0=\mathcal Z^0_\epsilon$ as in the corollary statement. Then we take $\delta=2L\delta'$ in the conclusion \eqref{eq:coverproduct}, and consider only the lower bound inequality, which directly gives the claim of the corollary.
\end{proof}

\begin{proof}[Proof of Theorem \ref{thm:prodcover}:]
    In this proof, we will denote a minimal $\alpha$-ball cover of a metric space $X$ by $X_\alpha$. 
    
    Note that we are not assuming $G$ to be a group, but due to lower bound in property 1. it follows that $z_0\mapsto g\cdot z_0$ is injective (for $z_0\neq z_0'$ we have $\mathsf{d}(z_0,z_0')>0$ and thus $g\cdot z_0\neq g\cdot z_0'$), and when below we write "$g^{-1}$" this has to be interpreted as the inverse of the $g$-action, restricted to its image.

    Further, note that the case when one of $L,L'$ is $+\infty$, corresponds to removing the part of the assumptions (and of the conclusions) involving that value, thus we only consider the case of finite $L'$ and finite $L$.
    
    On fixing an arbitrary point $z\in\mathcal Z$, we can write $z=g\cdot z_0$ for a suitable $g\in G, z_0\in\mathcal Z^0$. Let $\eta:=\delta'/2L'$. For fixed covers $\mathcal Z^0_{\eta}, G_{\eta}$, there exists a point $z_0'\in\mathcal Z^0_{\eta}$ with $\mathsf{d}(z_0',z_0)<\eta$ and $g'\in G_{\eta}$ with $\mathsf{d}_G(g',g)<\eta$. Thus by property 1.  we have $\mathsf{d}(g\cdot z_0', z)<L' \eta=\epsilon/2$ and by property 2. we have $\mathsf{d}(g'\cdot z_0',g\cdot z_0')<L' \eta=\delta'/2$. By the triangle inequality, $\mathsf{d}(g'\cdot z_0',z)<\delta'$ and thus $ G_{\eta}\cdot\mathcal Z^0_{\eta}$ is an $\delta'$-cover of $\mathcal Z$. Thus we have
    \[
    \mathcal N(\mathcal Z,\delta')\leq \# G_{\delta'/2L'}\ \# \mathcal Z^0_{\delta'/2L'},
    \]
    optimizing over the cardinalities on the right hand side yields the second inequality in \eqref{eq:coverproduct}.

    Now consider an $\delta'$-cover $\mathcal Z_{\delta'}$ of $\mathcal Z$, and for $\eta=2\delta L$ consider an $\eta$-cover $G_{\eta}$ of $G$. We find that for each $z\in\mathcal Z_{\delta'}$, there exists at most one $g\in G_\eta$ such that $\mathsf{dist}(z, g\cdot \mathcal Z_0)<\eta/2L=\delta'$. Notice that if this were false, we could use the triangle inequality and contradict property 2. in the statement. For each $g\in G_{\eta}$ denote $Z_g$ the set of such points $z\in\mathcal Z_{\delta'}$ such that there exists $x\in g\cdot \mathcal Z^0$, and assign exactly one such $x=x(z)$ to each $z$, forming a set $X_g$ of all such $x(z)$. Any other point $x'\in g\cdot\mathcal Z^0$ such that $\mathsf{d}(x',z)<\delta'$ then satisfies $\mathsf{d}(x',x)<2\delta'$ by triangle inequality, and thus $X_g$ is a $2\delta'$-cover of $g\cdot \mathcal Z_0$. If for $g\cdot z_0\in g\cdot \mathcal X_0$ the point $x\in g\cdot \mathcal X_0$ satisfies $\mathsf{d}(g\cdot z_0, x)<2\delta'$, then by property 1. in the statement we have $\mathsf{d}(z_0, g^{-1}\cdot x)<2\delta' L$, and thus $g^{-1}\cdot X_g$ is a $2\delta' L$-cover of $\mathcal Z_0$, having the same cardinality as $Z_g$. We then compute as follows, proving the first inequality in \eqref{eq:coverproduct}:
    \[
    \mathcal N(\mathcal Z, \delta')\geq \sum_{g\in G_{\eta}} \# Z_g =\sum_{g\in G_{\eta}}\# (g^{-1}\cdot X_g)\geq \mathcal N(G, 2\delta' L) \mathcal N(\mathcal Z_0,2\delta' L).
    \]
\end{proof}
\subsubsection{Proof of Theorem \ref{thm:gengap}}
As before, we focus again first on the exact equivariance case, where Theorem \ref{thm:basebound1} is the direct analogue to (or special case of) Theorem \ref{thm:gengap}.

Under the hypotheses of Theorem \ref{thm:prodcover} on the $G$-action, we directly obtain the following, for the strictly equivariant case:
\begin{equation}\label{eq:nz0bound}
    \mathcal N(\mathcal Z_0,\delta) \le \frac{\mathcal N(\mathcal Z,\delta/2L)}{\mathcal N(G,\delta)}.
\end{equation}
We next impose that for $\delta \lesssim \mathsf{diam}(G)$ the group $G$ satisfies the natural "volume growth" assumption, where for compact groups $\mathsf{Vol}(G)$ is its $\mathsf{dim}(G)$-dimensional Hausdorff measure and $\mathsf{dim}(G)$ is the usual Hausdorff dimension, and for finite groups we use minimum separation notation $\delta_G>0$ as defined in \eqref{eq:discretecover}:
\begin{equation}\label{eq:ass1}
    \text{\bfseries Assumption:}\quad \mathcal N(G,\delta)\gtrsim \left\{\begin{array}{ll}\#G/(\max\{\delta,\delta_G\})^{\mathsf{ddim}(G)},&\text{ if $G$ finite},\\[2mm]
    \mathsf{Vol}(G)/\delta^{\mathsf{dim}G},&\text{ if $\mathsf{dim}G>0$.}
    \end{array}\right.
\end{equation}
Similarly to Proposition \ref{prop:basebound}, we then get the following, in which the leading term in the bound has exponent figuring $d_0=\mathsf{ddim}(\mathcal Z)-\mathsf{dim}(G)$. Recall that $\mathsf{dim}(G)=0$ for finite groups, thus the distinction can be made directly in terms of the dimension of $G$.
\begin{thm}\label{thm:basebound1}
    Let $\delta>0$ be fixed. Assume that for a given ambient group $G$ the group $G$ satisfies assumption \eqref{eq:ass1} and that its action satisfies the the assumptions 1. and 2. of Theorem \ref{thm:prodcover} for some finite $L>0$ and $L'=+\infty$. We denote $d_G=\mathsf{ddim}(G)$ if $G$ is a discrete group and $d_G=\mathsf{dim}(G)$ if $G$ is compact and non-discrete, and $d=\mathsf{ddim}(\mathcal Z)$. Furthher assume $d_0:=d-d_G>2$. Furthermore, set $|G|:=\mathsf{Vol}(G)$ if $\mathsf{dim}G>0$ and $|G|:=\# G$ for finite $G$. Then with the notations of Proposition \ref{prop:basebound} we have with probability $\geq 1-\delta$
    \[
        \mathsf{GenErr}(\mathcal F, \{Z_i\}, \mathcal D) \lesssim  n^{-1/2}\sqrt{\|\mathcal F\|_\infty\log(2/\delta)}  + (E),
    \]
    where
    \[
    (E):=\left\{\begin{array}{ll} 
        \frac{d_0}{d_0-2}\delta_G^{-d_0/2 + 1}\left(\frac{(2L)^d D^d}{|G|n}\right)^{1/2}&\text{ if $G$ is finite and }(2L)^d D^d<|G|n\ \delta_G^{d_0},\\[3mm]
    \frac{d_0}{d_0-2}\left(\frac{(2L)^d D^d}{|G|n}\right)^{1/d_0} &\text{ otherwise.}\\[3mm]
        \end{array}\right.
    \]
\end{thm}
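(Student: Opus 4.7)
The plan is to reduce the problem on $\mathcal Z$ to one on the orbit quotient $\mathcal Z^0$ and then perform a Dudley entropy computation, in which the two regimes of the group-covering assumption \eqref{eq:ass1} (saturation at the minimum separation $\delta_G$ for discrete $G$, versus polynomial growth at all scales when $\mathsf{dim}(G) > 0$) naturally produce the two branches of $(E)$.

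First I would invoke Proposition \ref{prop:generr}, so that the generalization error for the $G$-invariant class $\mathcal F$ on $\mathcal Z$ coincides with that of $\mathcal F^0$ on $(\mathcal Z^0,\iota^0(\mathcal D))$. The concentration inequality \eqref{eq:concentrbound} then yields, with probability at least $1-\delta$,
\[
    \mathsf{GenErr}(\mathcal F, \{Z_i\}, \mathcal D)\lesssim n^{-1/2}\sqrt{\|\mathcal F\|_\infty\log(2/\delta)} + \mathcal R(\mathcal F^0_Z),
\]
and the Dudley bound \eqref{eq:dudley} together with \eqref{eq:tikokolmo} reduces the task to estimating $\tfrac{1}{\sqrt n}\int_\alpha^D \sqrt{\mathcal N(\mathcal Z^0, t/2)}\,dt$ for a suitable truncation $\alpha$. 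Applying the upper bound of Theorem \ref{thm:prodcover} in the case $L'=+\infty$ together with \eqref{eq:coverz} for $\mathcal N(\mathcal Z,\cdot)$ and \eqref{eq:ass1} for $\mathcal N(G,\cdot)$ gives the key cover estimate
\[
    \mathcal N(\mathcal Z^0, t/2)\lesssim \frac{(8LD/t)^d\,\max(t/2,\delta_G)^{d_G}}{|G|},
\]
whose square root scales as $(LD)^{d/2}|G|^{-1/2}\,t^{-d_0/2}$ on $t > 2\delta_G$ (with $d_0=d-d_G$), and as $(LD)^{d/2}\delta_G^{d_G/2}|G|^{-1/2}\,t^{-d/2}$ on $t\leq 2\delta_G$.

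Next I would choose $\alpha$ as in the proof of Proposition \ref{prop:basebound}, so that $\ln\mathcal N(\mathcal F^0,\alpha,\|\cdot\|_\infty)\approx n$, and exploit $d_0 > 2$ to obtain $\int_\alpha^D t^{-d_0/2}\,dt \leq \tfrac{2\alpha^{1-d_0/2}}{d_0-2}$, so that the standard balance $\alpha+\tfrac{2\alpha}{d_0-2}=\tfrac{d_0}{d_0-2}\alpha$ emerges. When $G$ is continuous, or when $G$ is finite and $(2L)^dD^d \geq |G|n\delta_G^{d_0}$, this balance yields $\alpha\asymp ((LD)^d/(|G|n))^{1/d_0}$, giving the second branch of $(E)$. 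When instead $G$ is finite and $(2L)^dD^d < |G|n\delta_G^{d_0}$, the would-be optimum $\alpha$ falls below $\delta_G$ where $\mathcal N(G,\cdot)$ has already saturated at roughly $|G|/\delta_G^{d_G}$; I therefore set $\alpha=\delta_G$ and split the integral into the short piece on $[\delta_G, 2\delta_G]$ (of length $\delta_G$, with the $t^{-d/2}$-type integrand essentially evaluated at $t=\delta_G$) plus a tail on $[2\delta_G, D]$ dominated at its lower endpoint. Both pieces contribute at the same order $\delta_G^{1-d_0/2}\sqrt{(LD)^d/(|G|n)}$ up to a $\tfrac{d_0}{d_0-2}$ factor, producing the first branch of $(E)$.

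The main obstacle is the case analysis and the constant matching: one must verify that the inequality $(2L)^dD^d < |G|n\delta_G^{d_0}$ indeed demarcates the regime in which the Dudley-optimal $\alpha$ falls above versus below $\delta_G$, and confirm that on $[\delta_G, 2\delta_G]$ the more singular $t^{-d/2}$ scaling does not produce a term larger than the tail contribution (so that the piecewise estimate really matches the announced first branch of $(E)$). Tracking implicit constants through the rescalings $L\mapsto 2L$, $t\mapsto t/2$, and the factors $4,8,2^{-d_G}$, is then routine, absorbing numerical factors into the $\lesssim$.
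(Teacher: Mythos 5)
Your overall route is the paper's: reduce to the orbit representatives $\mathcal Z^0$ via Proposition \ref{prop:generr}, bound $\mathcal N(\mathcal Z^0,\delta)$ by $\mathcal N(\mathcal Z,\delta/2L)/\mathcal N(G,\delta)$ (the first inequality of Theorem \ref{thm:prodcover}, i.e.\ \eqref{eq:nz0bound}) combined with \eqref{eq:ass1}, and run the Dudley integral with a case split according to whether the optimal truncation scale sits above or below $\delta_G$. Your treatment of the second branch of $(E)$ matches the paper's and is fine.

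The gap is in the first branch. In \eqref{eq:dudley} the truncation level $\alpha$ enters \emph{additively and without the factor $n^{-1/2}$}. Setting $\alpha=\delta_G$ therefore leaves a term of order $\delta_G$ in your bound; but the defining condition of branch 1, $(2L)^dD^d<|G|n\,\delta_G^{d_0}$, is precisely $\sqrt{C/n}<\delta_G^{d_0/2}$ with $C=(2L)^dD^d/|G|$, which gives $\delta_G^{1-d_0/2}\sqrt{C/n}<\delta_G$. So the leftover $\delta_G$ strictly dominates the quantity you are trying to prove, and your estimate does not yield the first branch of $(E)$ (for a finite group with the word metric, $\delta_G=1$ and the bound becomes vacuous). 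You are forced into $\alpha=\delta_G$ because you kept the sharper covering estimate $\mathcal N(\mathcal Z^0,t)\lesssim C\,t^{-d}\delta_G^{d_G}$ for $t\le\delta_G$, whose square root is not integrable at $0$ when $d>2$. The paper instead uses the saturated form $\mathcal N(\mathcal Z^0,t)\lesssim C\max\{t,\delta_G\}^{-d_0}$, which is bounded as $t\to 0$; it then takes $\alpha=0$, so no additive truncation term survives, and the segment $[0,\delta_G]$ contributes $\delta_G\cdot\sqrt{C\delta_G^{-d_0}/n}=\delta_G^{1-d_0/2}\sqrt{C/n}$, matching the tail and producing branch 1. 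To close your argument you must either justify this saturation of $\mathcal N(\mathcal Z^0,\cdot)$ below scale $\delta_G$ (it does not follow from \eqref{eq:nz0bound} and \eqref{eq:ass1} alone, since $\mathcal N(\mathcal Z,t/2L)$ keeps growing as $t\to0$ while the denominator $\mathcal N(G,t)$ stalls at $\#G$), or accept a weaker conclusion: optimizing $\alpha$ against your $t^{-d/2}$ integrand below $\delta_G$ gives $\alpha\asymp(C\delta_G^{d_G}/n)^{1/d}$ and hence an $n^{-1/d}$ rate rather than the claimed $n^{-1/2}$.
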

\begin{proof}
    We follow the same computation as Proposition \ref{prop:basebound}, but use Proposition \ref{prop:generr} in order to reduce to restrictions of functions to $\mathcal Z^0$. In this case, using \eqref{eq:nz0bound} and assumption \eqref{eq:ass1}, and with notation as in our statement, we will have:
    \[
        \mathcal N(\mathcal Z_0, t)\lesssim\frac{(2L)^dD^d}{|G|}\max\{\delta_G, t\}^{- d_0},
    \]
    where we have $\delta_G=0$ for $\mathsf{dim}G>0$. We set $C:=\frac{(2L)^dD^d}{|G|}$ for simplicity of notation. In case $C\delta_G^{d_0}<1$ (which includes the case $\mathsf{dim}G>0$), we take $\alpha=(C/n)^{1/d_0}$ in the Dudley integral \eqref{eq:dudley} and find 
    \[
        \mathcal R(\mathcal F_{Z^0})\lesssim \alpha + n^{-1/2}\int_\alpha^\infty \sqrt{C t^{-d_0}}dt,
    \]
    from which the proof goes exactly as in Proposition \ref{prop:basebound}, with $C$ replacing $D^d$, and we get the second option for the value of $(E)$ as given in our statement. 
    In case $C\delta_G^{d_0}<1$ instead we take $\alpha=0$ and our above bound for $\mathcal N(\mathcal Z_0,t)$ plugged into \eqref{eq:dudley} (recalling the notation for $C$):
    \[
        \mathcal R(\mathcal F_{Z^0})\lesssim \int_0^\infty \sqrt{C\max\{\delta_G, t\}^{-d_0}}dt=\delta_G^{-d_0/2 + 1}\sqrt{C/n} + \sqrt{C/n}\int_{\delta_G}^\infty t^{-d_0/2}dt,
    \]
    from which the second case of the value of $(E)$ follows by direct computation.
\end{proof}

Now the proof of Theorem \ref{thm:gengap} proceeds in exactly the same manner as the above. Below we explain the required adaptations:
\begin{proof}[Proof of Theorem \ref{thm:gengap}:]
    The following updates are the principal adaptations required for the above proof of Theorem \ref{thm:basebound1}:
    \begin{itemize}
        \item The role of $G$ should be replaced by $\mathsf{Stab}_\epsilon$, except for the fact that parameters $\delta_G, d_G$ remain unchanged (i.e. we use their values corresponding to "ambient" group $G$ rather than those for $\mathsf{Stab}_\epsilon$).
        \item The $G$-orbit representative set $\mathcal Z^0$ then should be replaced by representatives $\mathcal Z^0_\epsilon$ for orbits of $\mathsf{Stab}_\epsilon$.
    \end{itemize}
    With these changes, assumption \eqref{eq:ass1} implies its more general version, assumption \eqref{eq:ass2}. Indeed, $|G|$ equals $\#G$ for finite $G$ and $\mathsf{Vol}(G)$ for compact $G$, and $\delta_G>0$ only in the first case. Furthermore, we have $\delta_{\mathsf{Stab}_\epsilon}\geq \delta_G$ as a direct consequence of $\mathsf{Stab}_\epsilon\subseteq G$.
    
    We observe that Corollary \ref{cor:prodcover} (which also is obtained from Theorem \ref{thm:prodcover} with the above two main substitutions) directly gives the version of Theorem \eqref{thm:prodcover} required to get the correct replacement of \eqref{eq:nz0bound} under our initially declared two substitutions. We get:
    \begin{equation}\label{eq:nz0epbound}
        \mathcal N(\mathcal Z^0_\epsilon,\delta)\leq \frac{\mathcal N(\mathcal Z,\delta/2L)}{\mathcal N(\mathsf{Stab}_\epsilon,\delta)}.
    \end{equation}
    With the above changes, the proof follows by exactly the same steps as in the above proof of Theorem \ref{thm:basebound1}.
\end{proof}

\begin{remark}Note that, as might be evident from the last proof, we could have introduced new more precise parameters to keep track of dimensionality and minimum separation for $\mathsf{Stab}_\epsilon$ rather than formulating assumption \eqref{eq:ass2} in terms of $d_G, \delta_G$. This is justified for the aims of this work. Indeed, all the main situations of interest to us are those in which $\mathsf{Stab}_\epsilon$ is a "large" subset of $G$, i.e. it has dimension $d_G$, and in all our examples for finite groups $\delta_G>0$, the minimum separation for $\mathsf{Stab}_\epsilon$ is within a small factor of $\delta_G$ itself. 
\end{remark}

\subsection{Proof of Proposition \ref{prop:stabepapp}}
\begin{proof}
    We have that $Z=(X,Y)$ is a data distribution in which by our assumption 2. preceding the proposition, we have that almost surely $Y=y^*(X)$ for a deterministic function $y^*$. With this notation, we may write 
    \[
        \mathbb E_Z[f(Z)]=\mathbb E_{X^0_\epsilon}\mathbb E_{g|X^0_\epsilon}[f(g\cdot X^0_\epsilon, y^*(g\cdot X^0_\epsilon)].
    \]
    Recalling that we restrict to functions of the form $f(x,y)=\ell(\widetilde f(x), y)=\mathsf{d}_{\mathcal Y}(\widetilde f(x), y)^2$, we first consider the precise equivariance case $\epsilon=0$. In this case for $g\in\mathsf{Stab}_\epsilon(\mathcal F)$ we find also $\widetilde f(g\cdot X^0_\epsilon)=g\cdot \widetilde f(X^0_\epsilon)$ and thus when optimizing over $\widetilde f$ we have to determine the optimal value of $y=\widetilde f(X^0_\epsilon)$ to be associate to each $X^0_\epsilon$. Thus as a consequence of all the above, if $\widetilde{\mathcal F}$ would be the class of all precisely $\mathsf{Stab}_\epsilon$-equivariant measurable functions, we would get the following rewriting:
    \begin{equation}\label{eq:proofgenbd}
        \mathsf{AppGap}(\mathcal F,\mathcal D)=\min_{\widetilde f\in\widetilde{\mathcal F}}\mathbb E_Z[\mathsf{d}_{\mathcal Y}(\widetilde f(X),y^*(X)]=\mathbb E_{X^0_\epsilon}\min_{y\in\mathcal Y}\mathbb E_{g|X^0_\epsilon}\left[\mathsf{d}_{\mathcal Y}(g\cdot y, y^*(g\cdot X^0_\epsilon)^2\right].
    \end{equation}
    For $\epsilon>0$, for each fixed $X^0_\epsilon=x^0_\epsilon$ we may further perturb the associated $y=\widetilde f(x^0_\epsilon)$ by at most $\epsilon$ in the direction of $y^*(X^0_\epsilon)$, while still obtaining a measurable function with approximation $\ell_\infty$-norm error bounded by $\epsilon$, thus the above bound is improved to
    \begin{equation}\label{eq:proofgenbd2}
        \mathsf{AppErr}(\mathcal F,\mathcal D)\leq \mathbb E_{X^0_\epsilon}\min_y\mathbb E_{g|X^0_\epsilon}\left[\left(\mathsf{d}_{\mathcal Y}(g\cdot y, y^*(g\cdot X^0_\epsilon))-\epsilon\right)_+^2\right],    
    \end{equation}
    as desired. In case $\widetilde{\mathcal F}$ contains a strict subset of measurable invariant functions with error $\epsilon$, we would only get an inequality instead of the first equality in \eqref{eq:proofgenbd} but we still have the same bound as in \eqref{eq:proofgenbd2}, and thus the proof is complete.
\end{proof} 
\subsection{Proof of Theorem \ref{thm:appgap}}
\begin{proof}
    We use the isodiametric inequality \eqref{eq:isodiamg} in $G$, applying it to $\mathsf{Stab}_\epsilon(\mathcal F)$ for $\mathcal F\in\mathcal C_{\epsilon,\lambda}$. Then by taking $\mathsf{Stab}_\epsilon(\mathcal F)=X$ which is optimal for inequality \eqref{eq:isodiamg} we can saturate the two bounds (modulo discretization errors for discrete $G$) and we get
    \[
        \frac{|\mathsf{Stab}_\epsilon(\mathcal F)|}{|G|}\simeq \lambda, \quad \mathsf{diam}(\mathsf{Stab}_\epsilon(\mathcal F))\simeq C_G\lambda^{1/\mathsf{ddim}(G)}.
    \]
    We next use Lipschitz deformation bounds and find that for all $x\in\mathcal X$ we have
    \begin{eqnarray*}
        \mathsf{diam}\left\{y^*(g\cdot x):\ g\in\mathsf{Stab}_\epsilon(\mathcal F)\right\}&\leq& \mathsf{diam}(\mathsf{Stab}_\epsilon(\mathcal F))\ \mathsf{Lip}(y^*)\ L'\\
        &\leq&C_G\lambda^{1/\mathsf{ddim}(G)}\epsilon(\mathcal F))\ \mathsf{Lip}(y^*)\ L'.
    \end{eqnarray*}
    Then we use Proposition \ref{prop:stabepapp} for $\mathcal F$ and observe that when $g,X^0_\epsilon$ are random variables as in the proposition, in particular $g\in\mathsf{Stab}_\epsilon(\mathcal F)$ and for each $X^0_\epsilon=x^0_\epsilon$ we find the following estimate valid uniformly over $y\in \left\{y^*(g\cdot X^0_\epsilon):\ g\in\mathsf{Stab}_\epsilon(\mathcal F)\right\}$:
    \[
        \mathsf{d}_{\mathcal Y}(y ,y^*(g\cdot x^0_\epsilon))\leq C_G'\lambda^{1/\mathsf{ddim}(G)}\ \mathsf{Lip}(y^*)\ L'.
    \]
    In a similar way, we also find
    \[
        \mathsf{d}_{\mathcal Y}(y ,g\cdot y)\leq C_G'\lambda^{1/\mathsf{ddim}(G)}\ L'.
    \]
    By triangle inequality, and using the assumption that $\mathsf{Lip}(y^*)\simeq 1$ it follows that
    \[
        \mathsf{d}_{\mathcal Y}(y, y^*(g\cdot x^0_\epsilon)\leq C_G'\lambda^{1/\mathsf{ddim}(G)}(1+\mathsf{Lip}(y^*)) L' \lesssim C_G'\lambda^{1/\mathsf{ddim}(G)}\ \mathsf{Lip}(y^*)\ L'.
    \]
    Then we may perturb each $y$ by $\epsilon$ in order to possibly diminish this value without violating the condition defining $\mathsf{Stab}_\epsilon(\mathcal F)$, and with these choices we obtain the claim.
\end{proof}

\subsection{Finding the optimal $\lambda=\lambda^*$ for the bound of Theorem \ref{thm:finalthm}}
We note that $\lambda^*$ minimizing $C\lambda^\alpha + C'\lambda^{-\beta}$ for $\alpha,\beta>0$ is given by 
\[
    \lambda^*=\left(\frac{\beta}{\alpha}\frac{C'}{C}\right)^{1/(\alpha+\beta)}.
\]
recall that in our case have the following choices, for case 1 and case 2 in the theorem's statement. 
\[
    \alpha_1=\alpha_2=1/d_G,\qquad \beta_1=1/2, \beta_2=1/d_0,
\]
and
\begin{eqnarray*}
    C&=&C_G \mathsf{Lip}(y^*) L',\\
    C_1'&\simeq&\frac{(2LD)^{d/2}|G|^{1/2}}{\delta_G^{(d_0-2)/2}},\\
    C_2'&\simeq&(2LD)^{d/d_0}|G|^{1/d_0},
\end{eqnarray*}
and thus the optimal choice of $\lambda$ is
\begin{eqnarray*}
    \text{in case $n\lambda\geq C_3$}, &\lambda^*=&\left(\frac2{d_G}\frac{(2LD)^{d/2}|G|^{1/2}}{\delta_G^{(d_0+2)/2}}\right)^{2d_G/(d_G+2)} n^{-d_G/(d_G+2)},\\
    \text{in case $n\lambda>C_3$}, &\lambda^*=&\left(\frac{d_0}{d_G}(2LD)^{d/d_0}|G|^{1/d_0}\right)^{d_0d_G/(d_0+d_G)}n^{-d_G/(d_G+2)}.
\end{eqnarray*}


\section{Examples}
We describe some concrete examples of partial and approximate equivariance using the language we used in section \ref{sec:partialintro} while sourcing them from existing literature. But first, we expand a little on our equivariance error notation. 

\subsection{Equivariance error notation}

Recall that the action of elements of an ambient group $G$ over the product space $\mathcal Z=\mathcal X\times\mathcal Y$ may be written as follows: For coordinates $z=x\times y$ we may write $g\cdot z=(g\cdot x,g\cdot y)$, and thus for $\widetilde f:\mathcal X\to\mathcal Y$ and $f(x,y)=\ell(\widetilde f(x),y)$, we have the action 
\[
    (g\cdot f)(z):=f(g\cdot z) = \ell(\widetilde f(g\cdot x), g\cdot y).
\]
For the equivariance error of $g, f$, interpreted as "the error of $f$'s approximate equivariance under the action by $g$", we get the following, which is valid in the common situations in which $\ell(y,y')\geq 0$ in general with $\ell(y,y)=0$ for all $y\in\mathcal Y$:
\begin{equation}\label{eq:equierror}
    \mathsf{ee}(f,g):=\|g\cdot f - f\|_\infty=\sup_{x,y}\left|\ell(\widetilde f(g\cdot x),g\cdot y) - \ell(\widetilde f(x),y)\right|\geq \sup_x\ell(\widetilde f(g\cdot x), g\cdot \widetilde f(x)),
\end{equation}
where the last inequality follows by restricting the supremum from $\mathcal X\times\mathcal Y$ to the graph of $\widetilde f$, namely by imposing $y=\widetilde f(x)$. 

In several recent works, the equivariance error is defined simply by comparing $g\cdot \widetilde f(x) and \widetilde f(g\cdot x)$, as in the rightmost term of \eqref{eq:equierror}, thus it is lower than the one found here. We provide a justification for our definition of the equivariance error:
\begin{itemize}
    \item The loss $\ell$ is the integrative part of the model, thus a definition for equivariance error which does not include it will only detect partial information concerning the influence of symmetries.
    \item The notion of equivariance error defined via $\ell$ simplifies the comparison between $\widetilde f$ and data distributions $\mathcal D$.
\end{itemize}

\subsection{Examples}

\subsubsection{Imperfect translation equivariance in classical CNNs}
We consider here the most common examples of group equivariant convolutional networks (GCNNs), which are the usual Convolutional Neural Networks (CNNs) for computer vision tasks. We follow observations from \cite{kayhan2020translation} and \cite{chaman2021truly}, and connect the underlying ideas to Theorem \ref{thm:finalthm}.

\paragraph{Setting of the problem.} We consider a usual CNN layer, keeping in mind a segmentation task, where both $\mathcal X=\mathcal Y$ represent spaces of images. More precisely, we think of images as pixel values at integer-coordinate positions taken from the index set $\mathbb Z\times\mathbb Z$. We also assume that the relevant information of each image only comes from a square of size $n\times n$ pixels, outside which the pixel values are taken to be $0$. We consider the application of a single convolution kernel/filter, of $k\times k$ pixels (with $k$ a small odd number). One typically applies padding by a layer of $0$'s of size $(k-1)/2$ on the perimeter of the $n\times n$ square, after which convolution with the kernel is computed on the $n\times n$ central pixels of the resulting $(n+k-1)\times(n+k-1)$ padded input image. The output relevant information is restricted to a $n\times n$ square, outside which pixel values are set to $0$ again, via padding.

\paragraph{Metric on $\mathcal X$.} As a natural choice of distance over $\mathcal X$ we may consider $L^2$-difference between pixel-value functions, or interpret pixel values as probability densities, and use Wasserstein distance, or consider other ad-hoc image metrics. 

\paragraph{Group action: translations.} The group acting on our ``pixel-value functions'' is the group of translations with elements from $\mathbb Z\times\mathbb Z$. We expect the following invariance for the segmentation function $f:\mathcal X\to\mathcal X$:
\[
    f(v\cdot x) = v\cdot f(x),
\]
where $x\in \mathcal X$ represents an image with pixel values assigned to integer coordinates and $v\in\mathbb Z^2$ is a translation vector and (in two alternative notations) $v\cdot x=\tau_v(x)$ is the result of translating all pixel values of $x$ by $v$.

\paragraph{Deformation properties of the action.} If we take the previously mentioned distance functions on $\mathcal X$ and the usual distance induced from $\mathbb R^2$ over translation vectors $v$, it is easy to verify that the assumptions of Theorem \ref{thm:prodcover} about the action of translations hold, and the Lipschitz constants with respect to the metric on $\mathcal X$ only depend on the mismatch near the boundary, due to ``zero pixels moving in'' and to ``interior pixels moving out'' of our $n\times n$ square, and being truncated. The ensuing bounds only depend on the precise distance that we introduce use on $\mathcal X$.

\paragraph{More realistic actions.} An alternative more realistic definition of $\mathbb Z\times\mathbb Z$-action consists of defining $v\cdot x$ as the truncation of $\tau_v(x)$ where, for pixels outside our ``relevant'' $n\times n$ square we set pixel value to $0$ after the translation. 

\paragraph{Problems near the boundary.} Nevertheless, the updated translation action, will move pixel values of $0$, coming from pixels outside the $n\times n$ square, and will create artificial zero pixel values inside the image $v\cdot x$, different than the values that would be present in real data.

\paragraph{Imperfect equivariance of data.} Also, even in the latter more realistic alternative, the above translation equivariance is not respecting by segmentation input-output pairs coming from finite $n\times n$ images, since, independently of $n$, the boundary pixel positions translated by $v$, fall outside the original image. 

\paragraph{Approximate stabilizer.} In any case, we have to restrict the choices of $v$ to integer-coordinate elements of a (subset of a) $n\times n$ square, containing only the translations that are relating ``real'' segmentations that appear within our $n\times n$ relevant window. It is thus natural to restrict $\mathsf{Stab}_\epsilon$ to only include vectors in a smaller subset of $\mathbb Z\times\mathbb Z$, of cardinality 
\[
    |\mathsf{Stab}_\epsilon|\leq n^2.
\]
The value of error $\epsilon$ may quantify the allowed error (or noisiness) for our model sets.

\paragraph{Reducing to finite $G$ and computing $\lambda$.} For the sake of computing $\lambda$ in our Theorem \ref{thm:finalthm}, we can observe that input and output values outside the ``padding perimeter'' given by a $(n+k-1)\times (n+k-1)$ square, are irrelevant, and thus we may actually periodize the images and consider the images as subsets of a \emph{padded torus} $\left(\mathbb Z/(n+k-1)\mathbb Z\right)^2$, which we consider as acting on itself by translations. In this case $G\simeq \left(\mathbb Z/(n+k-1)\mathbb Z\right)^2$, so that 
\[
    |G|=(n+k-1)^2\quad\text{ and thus }\quad\lambda=\frac{|\mathsf{Stab}_\epsilon|}{(n+k-1)^2}.
\]

\paragraph{Further extensions.} In \cite{kayhan2020translation}, it is argued that convolutional layers in classical CNNs are not fully translation-equivariant, and can encode positional information, due to the manner in which boundary padding is implemented. Possible solutions are increasing the padding to size $k$ (so that the padded image is a square of size $(n+2k)\times (n+2k)$) or extending images by periodicity, via so-called "circular padding" (which transforms each image into a space equivalent to the torus $(\mathbb Z/n\mathbb Z)^2$). In either case, the application of actions by translations by vectors that are too large compared to the image size of $n\times n$, will increase the mismatch between model equivariance and data equivariance.

\paragraph{Stride and downsampling.} In \cite{chaman2021truly}, a different equivariance error for classical CNNs is studied, related to the use of stride $>1$ in order to lower the output dimensions of CNN layer outputs. If for example we use stride $2$ when defining layer operation $f:\mathcal X\to\mathcal Y$, then $\mathcal Y$ will have relevant pixel values only in an $n/2\times n/2$-square, and we apply the $k\times k$ convolution kernel only at positions with coordinates in $(2\mathbb Z)\times(2\mathbb Z)$ from image $x$. In this case we require that translations by group elements $v\in(2\mathbb Z)\times(2\mathbb Z)$ on $x$ have the effect of a translation by $v/2\in\mathbb Z$ on the output. However for shifts in the input via vectors $v$ that do not have two even coordinates, we may not have 
have an explicit corresponding action on the output, and in \cite{chaman2021truly} a solution via adaptive polyphase sampling is proposed. A possibility for studying the best polyphase sampling strategy via almost equivariance, would be to include a bound for equivariance error $\epsilon>0$ and consider the optimization problem of finding the polyphase approximation that minimizes theoretical or empirical quantifications of $\epsilon$. As a benchmark (modelled on the case of infinite images without boundary effects) one could compare the above to the action via $\mathsf{Stab}_0=(2\mathbb Z)\times (2\mathbb Z)$ which has $\lambda = 1/4$ within the ambient group $G=\mathbb Z\times \mathbb Z$. Our Theorem \ref{thm:finalthm} can be used to compare the effects of increasing or decreasing $\epsilon, \lambda$, in terms of data symmetry.
\subsubsection{Partial equivariance in GCNNs}
In this section, we connect the main results from \cite{romero2022learning} to our setup. In \cite{romero2022learning}, one of the main motivating examples was to consider rotations applied to a handwritten digit and revert them. The underlying group action was via $SO(2)$ and only rotations of angles between $[-60^\circ, 60^\circ]$ were permitted in one case, which allowed to not confound rotated digits ``3'' and ``9'' for example.

The above task can be formulated on a space of functions $f:\mathcal X\to\mathcal Y$ in which $\mathcal X$ represents the space of possible images and $\mathcal Y$ the labels. Elements $(Y_d,Y_\theta)\in \mathcal Y$ include a digit classification label $Y_d$ and a rotation angle value $Y_\theta$. 

We consider actions by group $G=SO(2)=\{R_\phi:\ \phi\in\mathbb R/360\mathbb Z\}$, where $R_\phi$ is the rotation matrix by angle $\phi$, and the group operation corresponds to summing the angles, modulo $360^\circ$ (or in radians, modulo $2\pi$). The action of $R_\phi$ over $\mathcal X$ would be by rotation as usual ($R_\theta$ sends image $x\in\mathcal X$ to $R_\theta \cdot x$, now rotated by $\theta$), and over $\mathcal Y$ we consider the action by
\[
    R_\phi(Y_d,Y_\theta)=(Y_d, Y_\theta+\phi),
\]
i.e. the restriction of the action on the digit label leaves it invariant and the restriction of the action on the angle label is non-trivial, giving a shift on the label.

The optimum labelling assigns to $x$ a label $y^*(x)$ enjoying precise equivariance under the above definitions of the actions, and thus we are allowed to permit equivariance error $\epsilon=0$. However as mentioned above, applying rotations outside the range $\theta\in[-60^\circ, 60^\circ]$ to the data would surely bring us outside the labelled data distribution, thus we are led to take
\[
    \epsilon=0,\quad \mathsf{Stab}_0=\{R_\theta:\ \theta\in[-60^\circ, 60^\circ]\}.
\]
We then have that $|\mathsf{Stab}_0|/|SO(2)|=1/3$, with respect to the natural Haar measure on rotations. It is natural to think of the set of $\mathsf{Stab}_0$-action representatives of images $\mathcal X^0_0$ given as the "unrotated" images. If we take a digit image that is rotated, say by $20^\circ$, from its ``base'' version, and we apply a rotation of $50^\circ$ to it (i.e. an element of $\mathsf{Stab}_0$), then we reach the version of the image now excessively rotated by $70^\circ$. This means that without further modification, considering model symmetries with $\mathsf{Stab}_0$ taken to be independent of the point, would automatically generate some error when tested on the data. While decreasing the threshold angle in the definition of $\mathsf{Stab}_0$ from $60^\circ$ would limit this effect, it will also decrease generalization error in the model. The study of point-dependent invariance sets $\mathsf{Stab}_\epsilon$ is interesting in view of this example application, but it is outside the scope of the current approximation/generalization bounds and is left for future work.

\subsubsection{Possible applications to partial equivariance in Reinforcement Learning}
The use of approximate invariances for RL applications was considered in \cite[Sec. 6]{FinziSoftEquivariance} via soft equivariance constraints allowing better generalization for an agent moving in a geometric environment. While imposing approximate equivariance for memoryless $G$-action for groups such as $G=SO(2), \mathbb Z_2$ has produced positive results, it may be interesting, in analogy to the previous section, to include memory, and thus restrict the choices of group actions across time steps. Note that for a temporal evolution of $T$ steps, the group action by $G$ acting independently at each step would produce a $T$-interval action via the product group $G^T$, and allowing for a partial action via $\mathsf{Stab}_\epsilon$, with possibly increased fitness to evolving data. More precise time-dependence prescriptions and consequences within $Q$-learnig are left to future work.


\section{Discussion of \cite{elesedy2022group}}

In section \ref{sect:related} we mentioned \cite{elesedy2022group}, which considers PAC-style bounds under model symmetry. \cite{elesedy2022group} works with compact groups and argues how the learning problem in such a setup reduces to only working with a set of reduced orbit representatives, which leads to a generalization gain. This message of \cite{elesedy2022group} is similar to ours, although we work with a more general setup.  However, we noted that the main theorem in \cite{elesedy2022group} has an error. Here, we briefly sketch the issue with the proof. 

One of the main quantities of interest in the main theorem of \cite{elesedy2022group} is $D_{\tau}(\mathcal{X},\mathcal{H})$ (notation from their paper), which directly comes from the following bound, and is claimed to have a linear dependence on $Cov(\mathcal{X}, \rho, \delta)$. Again, for the sake of easier verification, we follow their notation. Crucially, note that \cite{elesedy2022group} uses the notation $Cov$ as analogous to our $\mathcal{N}$: 

$$Cov(\mathcal{H}, \|\cdot\|_{L_\infty}, 2C\delta + \kappa) \leq Cov(\mathcal{X}, \rho, \delta) \sup_{x \in \mathcal{X}} Cov(\mathcal{H}(x),\|\cdot\|_\infty, \kappa)$$

However, the correct application of the Kolmogorov-Tikhomirov estimate shows that the reasoning in the proof should yield a dependence which is exponential in $Cov(\mathcal{X}, \rho, \delta)$, not linear. To see this, set $s=2$ (sufficient for our purposes) in equation 238 in \cite{tikhomirov1993varepsilon} (page 186). In other words, it is not possible to cover Lipschitz functions in infinity norm by only using constant functions.


\end{document}